\definecolor{DarkGreen}{rgb}{0.1,0.5,0.1}
\definecolor{DarkRed}{rgb}{0.5,0.1,0.1}
\definecolor{DarkBlue}{rgb}{0.1,0.1,0.5}
\definecolor{Gray}{rgb}{0.2,0.2,0.2}
\title{Is Your Model Predicting the Past?}
\author{Moritz Hardt\footnote{Max Planck Institute for Intelligent Systems, T\"ubingen, and T\"ubingen AI Center} \and Michael P. Kim\footnote{Miller Institute for Basic Research in Science, University of California, Berkeley}}
\date{}
\begin{document}
\maketitle

\begin{abstract}
When does a machine learning model predict the future of individuals and when does it recite patterns that predate the individuals? In this work, we propose a distinction between these two pathways of prediction, supported by theoretical, empirical, and normative arguments. At the center of our proposal is a family of simple and efficient statistical tests, called \emph{backward baselines}, that demonstrate if, and to what extent, a model recounts the past. Our statistical theory provides guidance for interpreting backward baselines, establishing equivalences between different baselines and familiar statistical concepts.
Concretely, we derive a meaningful backward baseline for auditing a prediction system as a black box, given only background variables and the system's predictions. Empirically, we evaluate the framework on different prediction tasks derived from longitudinal panel surveys, demonstrating the ease and effectiveness of incorporating backward baselines into the practice of machine learning.
\end{abstract}

\section{Introduction}
\label{sec:intro}
Proponents of predictive technologies for consequential decision-making emphasize the seeming ability of statistical models to anticipate individual actions. The ability to predict the future, so the argument goes, creates a rationale for adopting machine learning as policy:
if a risk score charted the future trajectory of individuals, then intervening in a person's life on the basis of the risk score would be justified~\cite{kleinberg2015prediction, obermeyer2016predicting}. At the same time, critical scholars caution that predictive technologies reproduce historical patterns of injustice and social stratification. In this account, rather than predicting future outcomes, statistical risk assessment tools punish individuals for factors predating their own agency~\cite{eubanks2018automating, benjamin2019race}.

Does a statistical model predict individual agency or recite the past? The answer to the question is often not obvious. Consider the problem of loan default prediction, one of many tasks often framed as predicting future outcomes. One predictor might identify individual behavior detrimental to loan repayment and adjust the predicted likelihood of default accordingly.
Another predictor might rely on historical associations between repayment and demographic factors, then predict based solely on the historical factors. Even if the models achieve the same accuracy, they derive predictive power along distinct pathways. In one solution, we rely on the effects of individual behavior on future outcomes. In the other, we reproduce patterns from the past that were determined before, and independently of, individual behavior. The latter form of prediction---resembling a kind of stereotyping---is core to many documented examples of bias and unfairness in the use of machine learning \cite{fta,angwin2016machine,chouldechova2017fair,hkrr,buolamwini2018gender, bolukbasi2016man, caliskan2017semantics}.

The distinction we draw is fundamental to the theory of equality of opportunity. Dworkin partitions attributes of an individual into factors for which the individual is responsible and factors outside the individual's control \cite{dworkin2018welfare, dworkin2018resources}. Similarly, Roemer distinguishes between effort that an individual takes and the individual’s \emph{type}. A type groups individuals of the same \emph{circumstances}, where “circumstances are those aspects of one’s environment (including, perhaps, one’s biological characteristics) which are beyond one’s control” \cite{roemer2000equality, roemer2002equality, roemer2016equality}. Dworkin and Roemer build on this fundamental moral distinction to define what it means to achieve equality in the allocation of resources and opportunity. Here, we focus on the consequences of the same distinction within the context of prediction.

Although the precise distinction is more subtle, we can approximate it with the help of time. Background variables in a prediction problem are those that were determined before the individual, such as, place and date of birth, or parents’ educational attainment. Background variables generally influence both an individual’s actions and the target of prediction. Individual factors are variables that the individual can exert direct---possibly not full---control over. Correspondingly, we coin the term \emph{backward prediction} to describe the use of background variables in prediction, and we use \emph{forward prediction} to refer to the use of individual factors.

\paragraph{Our contribution.}
In this work we formalize the distinction between forward and backward prediction. We build a theory for forward and backward prediction around a family of simple and effective statistical tests, we call \emph{backward baselines}. Backward baselines quantify how much of a predictor’s strength should be attributed to a given set of background variables. Applying our tools, we empirically find that in representative prediction problems involving longitudinal panel data, backward prediction contributes significantly to the strength of the predictor.  

The strength of backward prediction has important consequences. When prediction draws on background factors primarily, it is misleading to interpret the predictor as an individualized risk score. After all, backward predictors are invariant under individual variation and only depend on the individual’s background. The strength of backward prediction speaks to the social and environmental constitution of the target of prediction. Consequently, there is no relative advantage to targeting interventions at the individual level on the basis of backward predictors. Even if individual-level interventions are helpful, there is no added benefit in targeting them based on individual predictions compared with targeting based on background variables.

To give an example, we consider predicting an individual’s year-round medical expenditure based on the longitudinal MEPS panel survey data. We find that a predictor trained only on background variables nearly matches the predictive performance of classifiers trained on all features. Our finding echoes scholarship about the social determinants of health and medical expenditure~\cite{krieger2011epidemiology}.

We envision that backward baselines will form a useful component of the machine learning evaluation toolkit. Straightforward to apply, backward baselines provide valuable insights into the interpretation and validity of prediction in consequential settings.

\paragraph{Predicting the past.}
To introduce our discussion of backward prediction, we consider an explicit data generating process that moves through time. In Figure~\ref{fig:causal-nature}, we depict the temporal dynamics, in the form of a causal graph, with time evolving from left to right. We think of $X$ as individual-level covariates measured today, and $Y$ as an outcome of interest to be measured in the future.
In addition to the standard supervised learning variables, we also model
an additional \emph{context} variable~$W$---predating the measurement of the covariates or outcome---that may directly influence both~$X$ and~$Y$.
Concretely,~$X$ could represent a record of an individual's educational, personal, and financial history, used to predict income~$Y$ measured in~$10$ years, and $W$ could represent specific demographic features from the past, like childhood household income.

This explicit temporal model elucidates the distinction between \emph{forward} prediction and \emph{backward} prediction. Forward predictors model how the present measurements $X$ causally effect the future outcome~$Y$. Backward predictors estimate the outcome by first inferring the past context $W$ from $X$, then predicting $Y$ based on $W$. In other words, backward prediction provides information about $Y$ that could equally be explained by the past context $W$.

\begin{figure}[h]
    \centering
\begin{subfigure}{0.5\textwidth}
\centering
\begin{tikzpicture}[scale=0.15]
\tikzstyle{every node}+=[inner sep=0pt]
\draw [black] (30.8,-27.5) circle (3);
\draw (30.8,-27.5) node {$W$};
\draw [black] (38.8,-18.8) circle (3);
\draw (38.8,-18.8) node {$X$};
\draw [black] (46.2,-27.5) circle (3);
\draw (46.2,-27.5) node {$Y$};
\draw [black] (32.83,-25.29) -- (36.77,-21.01);
\fill [black] (36.77,-21.01) -- (35.86,-21.26) -- (36.6,-21.94); 
\draw [black] (40.74,-21.09) -- (44.26,-25.21);
\fill [black] (44.26,-25.21) -- (44.12,-24.28) -- (43.36,-24.93); 
\draw [black] (33.8,-27.5) -- (43.2,-27.5);
\fill [black] (43.2,-27.5) -- (42.4,-27) -- (42.4,-28); 
\end{tikzpicture}
\end{subfigure}
\begin{subfigure}{0.3\textwidth}
\centering
\parbox{\textwidth}{
\noindent Generating $\Dist$:
\begin{enumerate}
    \item $W \sim \Dist_W$
    \item $X \sim \Dist_{X\vert W}$
    \item $Y \sim \Dist_{Y \vert X,W}$
\end{enumerate}
}
\end{subfigure}
    \caption{Example data generating process for covariates $X$, outcome $Y$, and context $W$.
    Time starts from the left with context $W$ and evolves forward to the right, realizing $X$ then $Y$.}
    \label{fig:causal-nature}
\end{figure}

\paragraph{Backward baselines.}
Machine learning practitioners often build models using any and every predictive pathway available, including the backward pathway. Our goal is to elucidate and disentangle the prediction pathways that a given predictor uses.
Backward baselines provide a careful accounting of the predictor's use of the forward and backward predictive pathways.
The baselines are lightweight to run, only requiring input-output access to the predictive model, and are built on simple, but rigorous statistical foundations.
For instance, a key challenge in reasoning about backward prediction is that the context $W$ is typically robustly encoded within an individual's covariates $X$.
That is, even if we explicitly censor the attributes defining the context, backward prediction from $X$ may still be possible. Backward baselines handle this statistical subtlety gracefully, providing guaranteed estimates of the forward and backward predictive power, regardless of how redundantly $W$ is encoded in $X$.

Our work establishes backward baselines as an effective tool for investigating predictive models.  Our perspective is \emph{not} that the backward prediction pathway is inherently problematic. Rather, we advocate that investigators use backward baselines to understand and contextualize performance numbers in prediction tasks. Adding the baselines to the ``report card'' for supervised learning would add clarity about the underlying mechanisms used to predict. This clarity, in turn, may inform debate about whether machine learning is an appropriate tool for the task at hand. If model builders cannot find a predictor that improves significantly over backward baselines, we should hesitate before turning prediction into policy.

\section{Backward baselines}
\label{sec:baselines}
We work over a data universe $\Xcal\times \Ycal,$ where $\Xcal$ is a feature space and $\Ycal$ is a discrete set of labels in the case of classification problems. For regression problems, we take $\Ycal$ to be the real line~$\Rbb$. Fixing a loss function~$\ell:\Ycal \times \Ycal \to \Rbb^+$, for a given predictor~$h:\Xcal \to \Ycal$, we measure the fit of the predictor in terms of its expected loss over a distribution $(X,Y)\sim\Dist$ supported on $\Xcal\times\Ycal$:
\begin{gather*}
\ell_\Dist(Y,h(X)) = \E[\ell(Y,h(X))]
\end{gather*}
Throughout, we assume $\ell$ is symmetric in its two arguments.
We study both binary classification and regression, focusing on the zero-one loss $\Pr[Y \neq h(X)]$ with $\Ycal=\{0,1\}$ and squared loss $\E[(Y-h(X))^2]$ where $\Ycal=[0,1]$, respectively.

We extend this standard setup with a random variable~$W$, jointly distributed with $(X,Y)$ and supported on a discrete domain $\Wcal$.
The variable~$W$ represents a \emph{context} of both the individual covariates and the outcome of interest. 
While we model them as separate random variables, at times, we assume that $X$ encodes $W$, explicitly or implicitly.
For instance, in Proposition~\ref{prop:basic}(a), we assume that perfect reconstruction of $W$ is statistically possible from $X$.

\paragraph{Backward prediction baseline.}
In our typical story of backward prediction from $X$, we imagine that a predictor first resolves $W$ from $X$, then predicts $Y$ from $W$.
As such, if we are concerned that a predictor~$h$ is using the backward pathway, a natural baseline to compare against is predicting~$Y$ directly from the context $W$.
Fixing a loss $\ell$, we take $\gs:\Wcal \to \Ycal$ to be the statistically optimal predictor of $Y$ from $W$, and consider the following \emph{backward prediction baseline}, $\ell_\Dist(Y,\gs(W))$.
\begin{align*}
    \gs(w) = \argmin_{\yhat \in \Ycal}~ \E[\ell(Y,\yhat) \vert W = w]&&
    \ell_\Dist(Y,\gs(W)) = \E[\ell(Y,\gs(W))]
\end{align*}
The loss $\ell_\Dist(Y,\gs(W))$ provides a fundamental baseline for how predictable the outcome $Y$ is from $W$.
By comparing this baseline to $\ell_\Dist(Y,h(X))$, we can better contextualize the quality of predictions $h$ produces.
In particular, if $h$ does not achieve significantly better loss than $\gs$, then $h$ is not a very impressive predictor: rather than using machine learning to make decisions, you could get the same performance simply by stereotyping based on $W$.

\paragraph{Backward rounding baseline.}
While the optimal backward predictor $\gs$ is fundamental, it only depends on the underlying relationship between~$W$ and~$Y$, and does not depend on any predictor from $X$. Given such a predictor $h:\Xcal \to \Ycal$, we may instead consider a baseline based on prediction of $h(X)$ from $W$. We consider the \emph{backward rounding baseline}, defined by $\gh:\Wcal \to \Ycal$, which we take to be the optimal predictor of $h(X)$ from $W$.
\begin{align*}
    \gh(w) = \argmin_{\yhat \in \Ycal}~ \E[\ell(h(X),\yhat) \vert W = w]&&
    \ell_\Dist(h(X),\gh(W)) = \E[\ell(h(X),\gh(W))]
\end{align*}
Intuitively speaking, if the prediction~$h(X)$ is itself predictable from $W$, then it seems $h$ must be using the backward pathway.
Contrapositively, if $h$ is a forward predictor, then $h(X)$ cannot be predicted from $W$. An interesting aspect of this baseline is that $\gh$ can be estimated even when true outcomes are unavailable, unobserved, or unreliable.
Moreover, in settings where predictions are performative, in the sense of influencing the distribution on outcomes~\cite{perdomo2020performative}, the backward prediction baseline may not be applicable, while the backward rounding baseline is unaffected.

To understand what these two baselines measure exactly and how they relate, we need to formally define backward and forward prediction.

\subsection{Distinguishing forward and backward prediction}

We draw a distinction between two forms of prediction of~$Y$ from $X$:  
\emph{forward} prediction models the mechanism by which $X$ influences $Y$;
\emph{backward} prediction forecasts $Y$ from $X$ indirectly, by exploiting correlations through the context $W$. 
Because~$W$ may be redundantly encoded within $X$, we cannot simply remove~$W$ from the features to evaluate the predictive power along the forward pathway. Instead, we define forward and backward prediction based on conditional independence statements involving $Y,h(X),$ and $W$.

\begin{definition}[Backward and forward prediction]
\label{def:backward}
A predictor~$h:\Xcal \to \Ycal$ is a (pure) backward predictor of $Y$ if $h(X)$ is conditionally independent of $Y$ given $W$. $$h(X) \bot Y \vert W$$
A predictor~$h:\Xcal \to \Ycal$ is a (pure) forward predictor of $Y$ if $h(X)$ is independent of $W$. $$h(X) \bot W$$
\end{definition}
Most classifiers will be not be pure forward or pure backward predictors, but instead $h(X)$ will have some correlation with $Y$ that goes through $W$ and some correlation that is independent of $W$.
By comparing the loss achieved by a classifier~$h$ to one of our backward baselines, we can understand how close to a backward predictor the classifier is.

\paragraph{Backward prediction as random targeting.}
Connecting Definition~\ref{def:backward} to our motivating question, we see that using a backward predictor as the basis for intervention on \emph{individuals} is fruitless.
In particular, once we condition on a category defined by $W$, backward predictions $h(X)$ can be randomized across individuals with no loss.
\begin{fact}
    Suppose $h:\Xcal \to \Ycal$ is a backward predictor.
    For any setting of the context $W = w$, consider a randomized prediction strategy, where $R_w$ is an independent random variable distributed as $h(X) \vert W=w$.
    Then, the loss of $h$ is equal to that of random prediction according to $R_w$.
    \begin{gather*}
        \E[\ell(Y,h(X)) \vert W=w] = \E[\ell(Y,R_w) \vert W = w]
    \end{gather*}
\end{fact}
This fact follows immediately from the definition of backward prediction through conditional independence, but it gives a powerful conclusion.
Given a predictor that uses the backward pathway through $W$, once we condition on a particular setting of $W=w$, then the predictions $h(X)$ may as well be randomly assigned.
That is, using a backward predictor as the basis of intervention, is analogous to stereotyping according to the categories defined by $W$, and then targeting the intervention randomly within categories.




\section{Properties of backward baselines}
\label{sec:theory}
In this section, we develop basic theory for backward baselines, demonstrating how these baselines give us a lens into understanding backward and forward prediction.
We study the basic properties of backward baselines, establish interpretations of these baselines in terms of familiar statistical quantities, and draw connections to concepts from the study of fairness in prediction. We defer proofs of all formal claims to Appendix~\ref{app:proofs}.

\subsection{Basic properties}

Here, we establish some basic properties about backward baselines.
These properties are intuitive, but also reveal subtleties in what we can(not) conclude about backward and forward prediction from backward baselines.
We start with three simple properties of backward baselines, that help us to compare the predictive power from $X$ to the predictive power from $W$.
\begin{proposition}
\label{prop:basic}
The following properties of backward baselines hold.

\emph{(a)}~~ When $X$ encodes $W$, there exists a predictor $h^*:\Xcal \to \Ycal$ that achieves loss at most the backward prediction baseline.
\begin{gather*}
    \ell_\Dist(Y,h^*(X)) \le \ell_\Dist(Y,\gs(W))
\end{gather*}
\emph{(b)}~~If $h:\Xcal \to \Ycal$ is a backward predictor, then its loss is at least the backward baselines.
\begin{align*}
    \ell_{\Dist}(Y,\gs(W)) \le \ell_\Dist(Y,h(X))\quad\text{and}\quad
    \ell_\Dist(h(X),\gh(W)) \le \ell_\Dist(Y,h(X))
\end{align*}
\emph{(c)}~~ If $h:\Xcal \to \Ycal$ is a forward predictor, then $\gh$ is comparable to a constant predictor. Formally,
\begin{align*}
    \ell_\Dist(h(X),\gh(W)) \ge \argmin_{\yhat \in \Ycal}~ \E[\ell(h(X),\yhat)]\quad\text{and}\quad
    \ell_\Dist(Y,\gh(W)) \ge \argmin_{\yhat \in \Ycal}~ \E[\ell(Y,\yhat)]\,.
\end{align*}
\end{proposition}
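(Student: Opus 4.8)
\emph{Part (a).} Each part combines the relevant (conditional) independence with the optimality built into $\gs$ or $\gh$. For (a), the plan is to invoke the standing assumption that $W$ is perfectly reconstructible from $X$: from $I(W;X)=H(W)$ and discreteness of $W$ we obtain $H(W\mid X)=0$, hence there is a measurable map $\phi:\Xcal\to\Wcal$ with $W=\phi(X)$ almost surely. Taking $h^* = \gs\circ\phi$ then gives $h^*(X)=\gs(W)$ almost surely, so $\ell_\Dist(Y,h^*(X)) = \ell_\Dist(Y,\gs(W))$, which yields the stated inequality (in fact with equality). The only step needing care is the passage from the mutual-information identity to the existence of $\phi$, which is standard.

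\emph{Part (b).} Assuming $h$ is a backward predictor, so $h(X)\bot Y\mid W$, I would condition on $W$ and then on the realized value $u = h(X)$. Since $Y\bot h(X)\mid W$, the conditional law of $Y$ given $(W=w,\,h(X)=u)$ equals that given $W=w$, so
\[ \E[\ell(Y,u)\mid W=w,\,h(X)=u] = \E[\ell(Y,u)\mid W=w] \ge \min_{\yhat\in\Ycal}\E[\ell(Y,\yhat)\mid W=w] = \E[\ell(Y,\gs(w))\mid W=w], \]
the last equality being the definition of $\gs$. Averaging the left side over the conditional law of $h(X)$ given $W=w$, and then taking expectation over $W$, proves $\ell_\Dist(Y,\gs(W))\le\ell_\Dist(Y,h(X))$.

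\emph{Part (c).} Assuming $h$ is a forward predictor, so $h(X)\bot W$, I would argue that $\gh$ degenerates to a constant. For every $w$ we have $\E[\ell(h(X),\yhat)\mid W=w] = \E[\ell(h(X),\yhat)]$, which does not depend on $w$; fixing a minimizer in case of ties (the minimizer is unique for squared loss and generically for zero-one loss), $\gh\equiv c$ with $c=\argmin_{\yhat\in\Ycal}\E[\ell(h(X),\yhat)]$. Then $\ell_\Dist(h(X),\gh(W)) = \E[\ell(h(X),c)] = \min_{\yhat\in\Ycal}\E[\ell(h(X),\yhat)]$, and $\ell_\Dist(Y,\gh(W)) = \E[\ell(Y,c)] \ge \min_{\yhat\in\Ycal}\E[\ell(Y,\yhat)]$ since $c$ is merely one particular constant predictor of $Y$.

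The arguments are elementary; the step I expect to be the main obstacle is the order of operations in part (b): the average over the conditional distribution of $h(X)$ given $W$ must be taken \emph{before} invoking optimality of $\gs$, since without $h(X)\bot Y\mid W$ the inequality can reverse — a forward-leaning $h$ can beat $\gs$ precisely because further conditioning on $h(X)$ sharpens the prediction of $Y$ beyond what $W$ alone permits. A minor secondary point is confirming that the tie-breaking convention for $\gh$ in part (c) does not affect the loss values.
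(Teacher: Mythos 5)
Your proposal is correct and follows essentially the same route as the paper's proof: part (a) composes $\gs$ with the reconstruction map guaranteed by $I(W;X)=H(W)$, part (b) uses the conditional independence to factor the conditional expectation and then applies the pointwise optimality of $\gs$ given $W$, and part (c) uses $h(X)\bot W$ to show $\gh$ collapses to the constant minimizer of $\E[\ell(h(X),\yhat)]$. Your added remarks on tie-breaking and on where the conditional independence is essential are sound but do not change the argument.
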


These straightforward properties provide a foundation for reasoning about backward and forward prediction.
Proposition~\ref{prop:basic}(a) establishes that the backward prediction baseline is reasonable minimum standard for predictive accuracy from $X$.
Proposition~\ref{prop:basic}(b)-(c) can be viewed as one-sided tests that let us demonstrate that a predictor is not a (pure) backward or forward predictor.

For a backward predictor, the backward baselines lower bound the loss $\ell_\Dist(Y,h(X))$.
On the other hand, for a forward predictor that achieves nontrivial loss (i.e., beating a constant), the backward baselines upper bound the loss.
While Proposition~\ref{prop:basic}(b)-(c) each provide one-sided tests, together they can tell a rich story.
For instance, suppose a forward predcitor $f$ and backward predictor $b$ achieve similar loss $\ell_\Dist(Y,f(X)) \approx \ell_\Dist(Y,b(X))$.
We may distinguish these cases by backward rounding the predictors to $g^f$ and $g^b$.
Rounding $f$ to $g^f$ will cause a significant deterioration in loss (to that of a constant predictor), but the rounded backward predictor $g^{b}$ will maintain the predictive power of $b$.
In this case, we may still decide to reject $f$ if it achieves mediocre accuracy, but cannot reliably reject it on the basis of being a backward predictor.


\subsection{Rounding recovers optimal backward prediction}
As discussed, we can define backward baselines in terms of the optimal predictor $\gs$ of $Y$ from $W$, and also in terms of the backward-rounded predictor $\gh$ of $h(X)$ from $W$.
In generality, these two predictors realize different baselines; however, if $h(X)$ is an accurate predictor of $Y$, then intuitively, it would seem that the baselines over $\gs$ and $\gh$ might be similar.
For instance, for classification according to the zero-one loss and regression according to the squared loss, these predictors have closed forms.
\begin{align*}
\begin{array}{lcc}
\textbf{Zero-one}& \gs(w) = \argmax_{\yhat \in \Ycal} \Pr[Y = \yhat \vert W = w]\quad & \gh(w) = \argmax_{\yhat \in \Ycal} \Pr[h(X) = \yhat \vert W = w] \\
& ~ & \\
\textbf{Squared} &\gs(w) = \E[Y \vert W = w] & \gh(w) = \E[h(X) \vert W = w]
\end{array}
\end{align*}
We introduce the following technical conditions, which are useful for analyzing various properties of backward baselines.
\begin{definition}[Confidence]
\label{def:confident}
A classifier $h:\Xcal \to \Ycal$ is (over)confident on $Y$ over $W$ if
\begin{gather*}
    \Pr[h(X) = \gs(W)] \ge \Pr[Y = \gs(W)].
\end{gather*}
\end{definition}
Intuitively, confidence says that $h(X)$ does not underestimate the probability that $Y$ takes it's most likely value within the context $W$.
Such (over)confidence of classifiers is typically observed in practice \cite{guo2017calibration}.
\begin{definition}[Weak calibration]
\label{def:calibration}
A predictor $h:\Xcal \to [0,1]$ is weakly calibrated\footnote{This notion of weak calibration was introduced recently by \cite{gopalan2022lowdegree}, who refer to it as degree-$2$ calibration.} to $Y$ over $W$ if
\begin{align*}
    \E[Y \vert W] = \E[h(X) \vert W]\qquad\text{and}\qquad
    \E[Yh(X) \vert W] = \E[h(X)^2 \vert W].
\end{align*}
\end{definition}
Weak calibration rules out predictors that blatantly ignore variation in $Y$ based on the context $W$ (including pure forward predictors).
Definition~\ref{def:calibration} relaxes traditional notions of calibration~\cite{dawid} and is implied by loss minimization, both in theory and our experiments. We show that under these conditions, backward rounding obtains optimal prediction of $Y$ from $W$.
\begin{proposition}[Informal]
\label{prop:gh-equals-gs}
For a confident classifier $h:\Xcal \to \set{0,1}$ or a weakly calibrated predictor~$h:\Xcal \to [0,1]$, we have $\gh = \gs$ for the zero-one loss and squared loss, respectively.
\end{proposition}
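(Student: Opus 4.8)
The plan is to treat the two losses separately, in each case starting from the closed forms for $\gs$ and $\gh$ recorded just above the statement. For the squared loss these are $\gs(w) = \E[Y \mid W = w]$ and $\gh(w) = \E[h(X) \mid W = w]$. The first weak‑calibration identity is precisely $\E[h(X)\mid W] = \E[Y\mid W]$, so $\gh(w) = \E[h(X)\mid W=w] = \E[Y\mid W=w] = \gs(w)$ for every context $w$. Only the first of the two calibration identities is needed for this particular conclusion, and the case is otherwise immediate.

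For the zero‑one loss, $\gs(w)$ is a value $\yhat \in \set{0,1}$ maximizing $\Pr[Y = \yhat \mid W=w]$ and $\gh(w)$ is a value maximizing $\Pr[h(X) = \yhat \mid W=w]$, and the goal is to show these two argmaxes can be chosen to coincide. The heart of the argument is to use the confidence hypothesis at the level of a single context: on a fixed $w$, $\Pr[h(X) = \gs(W)\mid W=w] \ge \Pr[Y = \gs(W)\mid W=w]$. Since $\gs(w)$ is the most likely value of $Y$ given $W=w$, the right‑hand side equals $\max\{\Pr[Y=0\mid W=w],\,\Pr[Y=1\mid W=w]\} \ge \frac12$, so $\Pr[h(X) = \gs(w)\mid W=w]\ge\frac12$. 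As $h(X)$ is $\set{0,1}$‑valued, $\Pr[h(X) = 1-\gs(w)\mid W=w] = 1 - \Pr[h(X)=\gs(w)\mid W=w] \le \frac12 \le \Pr[h(X) = \gs(w)\mid W=w]$, so $\gs(w)$ is among the maximizers of $\yhat\mapsto\Pr[h(X)=\yhat\mid W=w]$, and we may take $\gh(w)=\gs(w)$.

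Two points remain. Ties are harmless: if $\Pr[h(X)=0\mid W=w]=\Pr[h(X)=1\mid W=w]$ one simply selects $\gh(w)=\gs(w)$, and at any $w$ with $\Pr[Y=1\mid W=w]=\frac12$ the value of $\gs(w)$ does not affect any loss, so a consistent tie‑breaking rule costs nothing. The step I expect to be the real obstacle is passing from the confidence hypothesis \emph{as written} --- an inequality between the \emph{unconditional} quantities $\Pr[h(X)=\gs(W)]$ and $\Pr[Y=\gs(W)]$ --- to the \emph{contextwise} inequality used above. I would argue that the contextwise form is the intended reading (this matches the surrounding ``within the context $W$'' discussion and the closed‑form picture), in which case $\gh = \gs$ holds pointwise. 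If only the aggregate inequality is available, one must instead control the contexts on which $\gh$ and $\gs$ could disagree and show they contribute no net deficit, i.e. that $\ell_\Dist(Y,\gh(W)) = \ell_\Dist(Y,\gs(W))$ even if the functions themselves differ; this reconciliation is the delicate bookkeeping in the proof.
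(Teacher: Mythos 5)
Your proposal is correct and follows essentially the same route as the paper: for squared loss the first weak-calibration identity gives $\gh(w)=\E[h(X)\mid W=w]=\E[Y\mid W=w]=\gs(w)$ immediately, and for zero-one loss the paper likewise argues per context that $\Pr[h(X)=\gs(W)\mid W]\ge\Pr[Y=\gs(W)\mid W]\ge 1/2$ forces $\gh(W)=\gs(W)$. The subtlety you flag about the confidence hypothesis is resolved exactly as you anticipate: the paper's proof (here and again in Proposition~\ref{prop:cov}) applies the inequality conditionally on $W$, i.e.\ the ``contextwise'' reading of Definition~\ref{def:confident} is the intended one, so no aggregate-to-pointwise bookkeeping is needed.
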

The interchangeability of $\gs$ and $\gh$ may be useful practically and conceptually.
For instance, the analysis of Proposition~\ref{prop:gh-equals-gs} reveals that the backward rounding baseline lower bounds the backward prediction baseline, $\ell_\Dist(h(X),\gh(W)) \le \ell_\Dist(Y,\gs(W))$ (which, in turn, gives a strengthening of Proposition~\ref{prop:basic}(b) under confidence or weak calibration).

\subsection{Measuring forward predictive power}
A key motivation for our study of backward baselines was the observation that, given a predictor~$h$, determining the extent of forward prediction may be challenging.
We show that under natural conditions, the backward rounding baseline for $\gh$ reveals insight into the forward predictive power of $h$.
Conveniently, evaluating this baseline only requires black-box access to the predictive model and $(X,W)$ samples---not labels $Y$.
The lightweight nature of the baseline makes it an appealing option to audit for backward prediction, especially for proprietary predictive models.
Concretely, we show that the backward rounding baseline gives insight into the covariance between $h(X)$ and $Y$ after conditioning on $W$.
\begin{proposition}
\label{prop:cov}
Suppose a classifier $h:\Xcal \to \set{0,1}$ is confident on $Y$ over $W$.
Let $\ell_W(h,\gh)$ denote the backward rounding baseline $\Pr[h(X) \neq \gh(W) \vert W]$ conditioned on $W$.  Then,
\begin{gather*}
    \Cov(h(X),Y \vert W) \le \Var(h(X) \vert W) = \ell_W(h,\gh) \cdot (1-\ell_W(h,\gh)) \le \Var(Y \vert W)\,.
\end{gather*}
If a predictor $h:\Xcal \to [0,1]$ is weakly calibrated to $Y$ over $W$, then
\begin{gather*}
    \E[(h(X) - \gh(W))^2] = \E[(Y - \gs(W))^2] - \E[(Y - h(X))^2] = \E_W[\Cov(Y,h(X) \vert W)]\,.
\end{gather*}
\end{proposition}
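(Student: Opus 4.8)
The plan is to handle the classification and regression cases separately; in the classification case I would argue conditionally on each fixed value $W=w$, which reduces the whole chain to a statement about two Bernoulli parameters.

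\emph{Classification.} Fix $w$ and write $p=\Pr[h(X)=1\vert W=w]$ and $q=\Pr[Y=1\vert W=w]$. Under the zero-one loss $\gh(w)$ is the more likely value of $h(X)$ given $W=w$, so $\ell_W(h,\gh)=\min(p,1-p)$ and hence $\ell_W(h,\gh)\,(1-\ell_W(h,\gh))=\min(p,1-p)\,\max(p,1-p)=p(1-p)=\Var(h(X)\vert W=w)$; this middle equality is pure algebra. For the left inequality, I would bound the joint probability $\Pr[h(X)=1,Y=1\vert W=w]\le\min(p,q)$, so that $\Cov(h(X),Y\vert W=w)\le\min(p,q)-pq$, and then note that $\min(p,q)-pq$ equals $p(1-q)$ when $p\le q$ and $q(1-p)$ when $p\ge q$, each of which is at most $p(1-p)$ directly by the case hypothesis---so, like the equality, this inequality needs no assumption. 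For the right inequality I would invoke confidence in its conditional form: since $\gs(w)$ is the modal value of $Y$ given $W=w$, confidence says $\Pr[h(X)=\gs(w)\vert W=w]\ge\max(q,1-q)$, and as the left side equals $p$ or $1-p$, this forces $p$ to lie at least as far from $1/2$ as $q$ does, i.e.\ $\min(p,1-p)\le\min(q,1-q)$; since $t\mapsto t(1-t)$ is symmetric about $1/2$ and increasing on $[0,1/2]$, we conclude $p(1-p)\le q(1-q)=\Var(Y\vert W=w)$. (The conditional statement $\Pr[h(X)=\gs(w)\vert W=w]\ge\Pr[Y=\gs(w)\vert W=w]$ is what a pointwise chain needs; the unconditional Definition~\ref{def:confident} is its expectation over $W$.)

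\emph{Regression.} The first weak-calibration identity $\E[Y\vert W]=\E[h(X)\vert W]$ says exactly that $\gs$ and $\gh$ coincide---both are the conditional mean---so set $g=\gs=\gh$. I would then expand all three squared-error expressions, move the conditioning through the cross terms with the tower rule, and substitute the two calibration identities: $\E[Y\,g(W)]=\E\big[g(W)\,\E[Y\vert W]\big]=\E[g(W)^2]$ from the first, and $\E[Y\,h(X)]=\E\big[\E[Y\,h(X)\vert W]\big]=\E\big[\E[h(X)^2\vert W]\big]=\E[h(X)^2]$ from the second. After these substitutions, each of $\E[(h(X)-g(W))^2]$, $\;\E[(Y-g(W))^2]-\E[(Y-h(X))^2]$, and $\E_W[\Cov(Y,h(X)\vert W)]$ collapses to the single quantity $\E[h(X)^2]-\E[g(W)^2]$, which establishes both claimed equalities.

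There is no genuinely hard step here. In the classification case the one computation beyond bookkeeping is the inequality $\min(p,q)-pq\le p(1-p)$, a one-line case check, and the point that needs care is the \emph{direction} in which confidence is used: confidence makes $h(X)\vert W$ \emph{more} concentrated than $Y\vert W$, which is precisely why it bounds $\Var(Y\vert W)$ from below by $\Var(h(X)\vert W)$ and not the other way; the only casework is the split on whether $\gs(w)$ is $0$ or $1$. In the regression case the sole hazard is tracking signs in the expansion, and the key move is noticing that the first calibration identity makes $\gs$ and $\gh$ interchangeable.
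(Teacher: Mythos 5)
Your proposal is correct and follows essentially the same route as the paper's proof: the same Bernoulli variance/peakedness argument via (conditional) confidence for the classification chain, and the same expansion-plus-tower-rule computation using the two weak-calibration identities (and $\gs=\gh$) for the regression identities. The only differences are cosmetic---you spell out the case check for $\Cov(h(X),Y\vert W)\le\Var(h(X)\vert W)$, which the paper simply asserts as a Bernoulli fact, and you collapse all three regression quantities to $\E[h(X)^2]-\E[g(W)^2]$ rather than tracking the loss difference directly; your remark that the argument really uses the conditional form of confidence matches how the paper itself applies Definition~\ref{def:confident}.
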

In other words, if $h(X)$ carries lots of information about $Y$, even after conditioning on $W$, then the backward rounding baseline will be large.
The arguments to establish Proposition~\ref{prop:cov} are elementary, but the consequences are powerful.
An auditor, who is given only black-box access to a classifier or predictor $h$, can reliably determine when $h$ is a backward predictor by evaluating the backward rounding baseline without any labels $Y$ from the true distribution.
Concretely, the backward rounding baseline allows the auditor to establish an upper bound on the amount of information about $Y$ contained in $h(X)$ that isn't explained by $W$.

In the classification setting, the bound obtained by the rounding baseline is an inequality, but is tighter than the bound given by the backward prediction baseline.
In the regression setting, the rounding baseline also characterizes the difference between the backward prediction baseline and the expected loss of $h$, which would otherwise require labeled outcomes $Y$ to evaluate.
In Appendix~\ref{app:proofs}, we describe an additional backward baseline for classification, which use labels from $Y$ to gives an exact characterization of the forward predictive power of $h$.

\subsection{Backward baselines and demographic parity}
When $W$ is defined by demographic features that are considered to be sensitive attributes, forward prediction recovers the notion of \emph{demographic parity} from the literature on fair machine learning \cite{fta}.
While a natural desideratum for equal treatment under a decision rule, the shortcomings of demographic parity as a notion of fairness have been documented extensively \cite{fta,liu2018delayed}.
As such, requiring pure forward prediction may result in unintended and undesirable consequences, just as blinding predictors of a sensitive attribute can.

Exploring the analogy between backward baselines and fair prediction sheds new light on demographic parity and stereotyping.
In Appendix~\ref{app:fairness}, we formalize a duality between forward and backward prediction.
Translating the duality into the language of fairness, the optimal unconstrained prediction decomposes into the optimal prediction under demographic parity plus the optimal ``stereotyping'' prediction that makes its judgments solely based on the sensitive attribute.

\section{Empirical evaluation of backward baselines}
\label{sec:experiments}
The goal of our experiments is to empirically evaluate backward baselines. Toward this goal, we searched for datasets that meet at least four important criteria: 
\begin{enumerate}
\item The outcome variable demonstrably lies in the future relative to the features. 
\item The dataset contains general demographic background variables, as well as features specific to the prediction task. 
\item Non-trivial prediction accuracy is possible. 
\item Individual-level microdata are publicly available.
\end{enumerate}
Many machine learning datasets are unclear about the temporality of the outcome variable, thus falling short of the first criterion. For example, several datasets about credit default prediction do not clarify whether data points correspond to individuals who have already defaulted, or individuals that ended up defaulting some specific time after feature collection.

Well-suited to our evaluation are longitudinal panel surveys. Each panel consists of some number of survey participants who are interviewed in multiple rounds (or waves). By taking features from one round to predict outcomes in a later round, we can create prediction problems where outcomes and features are temporally well-separated. We choose two major panel surveys relating to medical expenditure and income. Complementing these panel surveys, we also consider a notorious dataset from the criminal legal domain. Extended results and full details are in Appendix~\ref{app:experiments} and Appendix~\ref{sec:code}. The code is available at:
\begin{center}
    \url{https://github.com/socialfoundations/backward_baselines}
\end{center}

\subsection{Medical Expenditure Panel Survey (MEPS)}

The Medical Expenditure Panel Survey (MEPS) is a set of large-scale surveys of families and individuals, their medical providers, and employers across the United States, aimed at providing insights into health care utilization. We work with the publicly available MEPS 2019 Full Year Consolidated Data File. The dataset we consider has 28,512 instances corresponding to all persons that were part of one of the two MEPS panels overlapping with calendar year 2019. Specifically, Panel 23 has Rounds 3--5 in 2019, and Panel 24 has rounds 1--3 in 2019. Round 3 of Panel 23 and Round 1 of Panel 24 are the first of each panel in 2019. The survey distinguishes between demographic variables and variables corresponding to survey questions in of the rounds of the two panels. We create a prediction task whose goal is to predict a full-year outcome from Round 3 of Panel 23 and Round 1 of Panel 24. The target variable measures the total health care utilization across the year. We create a roughly balanced binarization of the target variable. A precise definition and further details are in the appendix.

We compute backward baselines in terms of the features \emph{age}, \emph{race}, \emph{age} and \emph{race} together, as well as all variables designated as \emph{demographic} by the survey documentation. These include additional variables relating to age, race and ethnicity, marital status, nationality, and languages spoken. Figure~\ref{fig:meps-zeroone} summarizes our findings. In particular, backward baselines trained on all demographic background variables match nearly all of the predictive performance of the classifiers trained on all features, similarly across three different prediction models.
An extended set of figures is included in Appendix~\ref{app:experiments}.

\begin{figure}[h]
\includegraphics[width=0.98\linewidth]{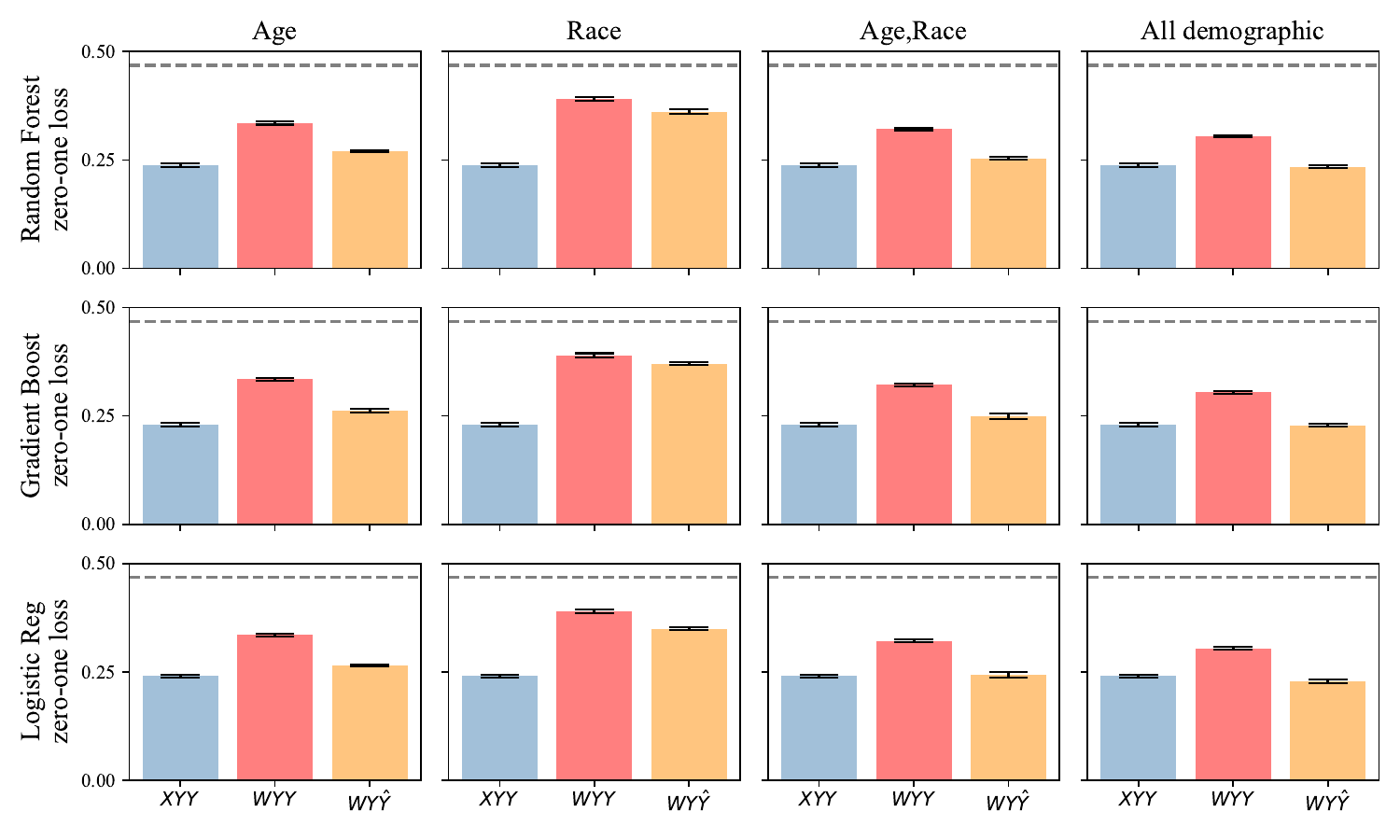}
\includegraphics[width=0.98\linewidth]{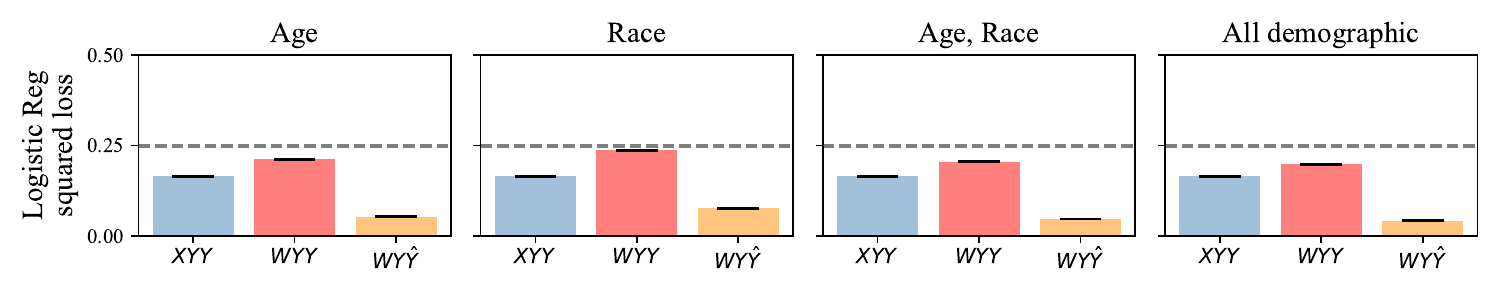}
\includegraphics[width=0.98\linewidth]{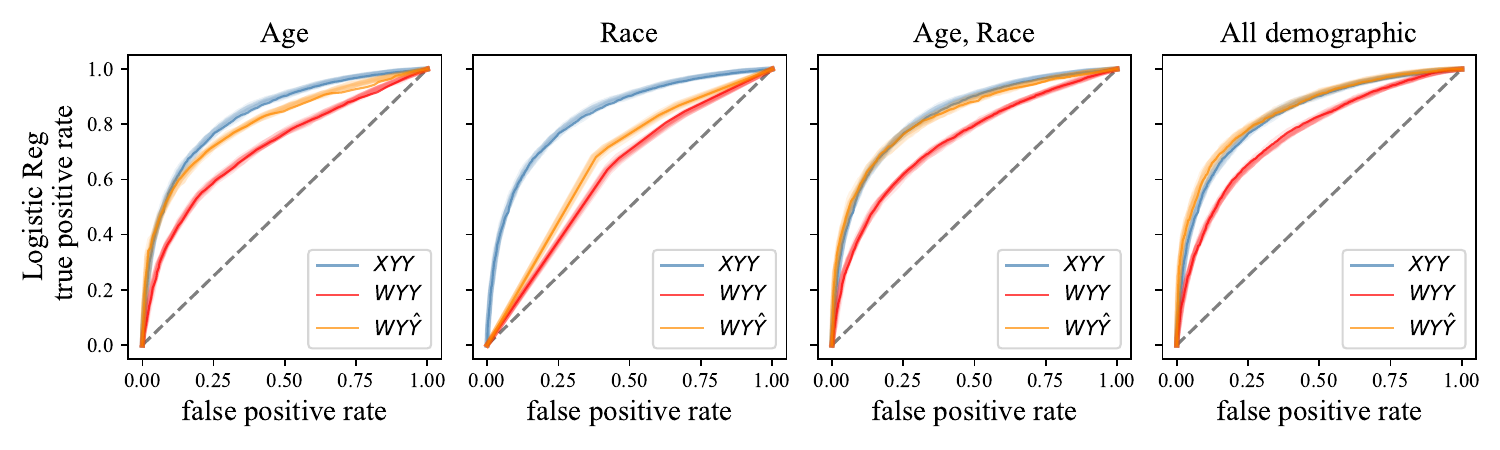}
\caption{Backward baselines on MEPS, columns are different features, rows are different classifiers (random forest, gradient boosting, logistic regression) and metrics (zero-one loss, squared loss, ROC curves). Label $XYY$ denotes standard training and testing, label $WYY$ is the backward prediction baseline, label $WY\hat Y$ is the backward rounding baseline. Gray dashed line indicates performance of constant predictor. Error bars represent a standard deviation across $10$ random seeds.}
\label{fig:meps-zeroone}
\end{figure}

\subsection{Survey of Income and Program Participation (SIPP)}

The Survey of Income and Program Participation (SIPP) is an import longitudinal survey conduced by the U.S.~Census Bureau, aimed at capturing income dynamics as well as participation in government programs.

We consider Wave 1 and Wave 2 of the SIPP 2014 panel data. The target variable is based on the official poverty measure (OPM), a cash-income based measure of poverty. We compute this measure based on Wave 2 data. We again discretize the measure to obtain to roughly balanced classes for our binary prediction task. The goal is to predict this outcome based on features collected in Wave 1. After cleaning and preprocessing our data contains 39720 rows and 54 columns. We consider background variables \emph{education}, \emph{race}, \emph{education} and \emph{race} together, as well as all demographic variables, specifically, \emph{age}, \emph{gender}, \emph{race}, \emph{education}, \emph{marital status}, \emph{citizenship status}. In Figure~\ref{fig:sipp-main}, we restrict our attention to the logistic regression model. The other models perform similarly and the full set of results can be found in Appendix~\ref{app:experiments}.

\begin{figure}[h]
\includegraphics[width=0.98\linewidth]{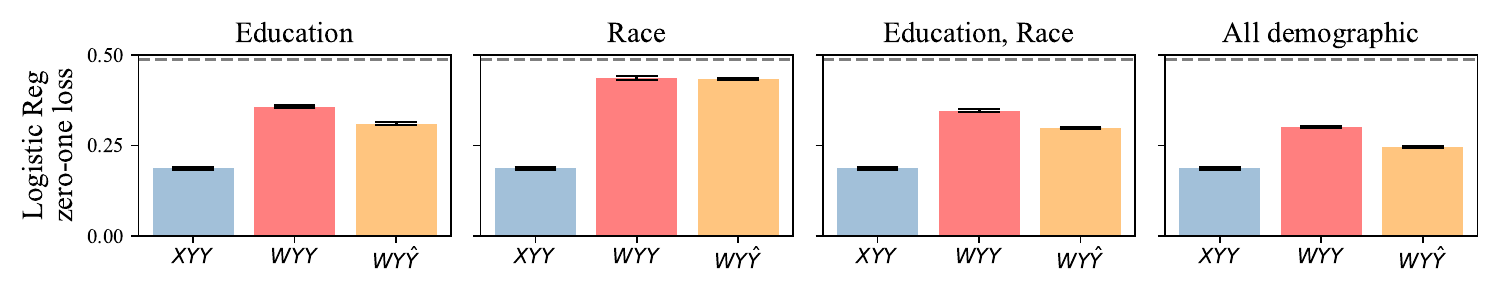}
\includegraphics[width=0.98\linewidth]{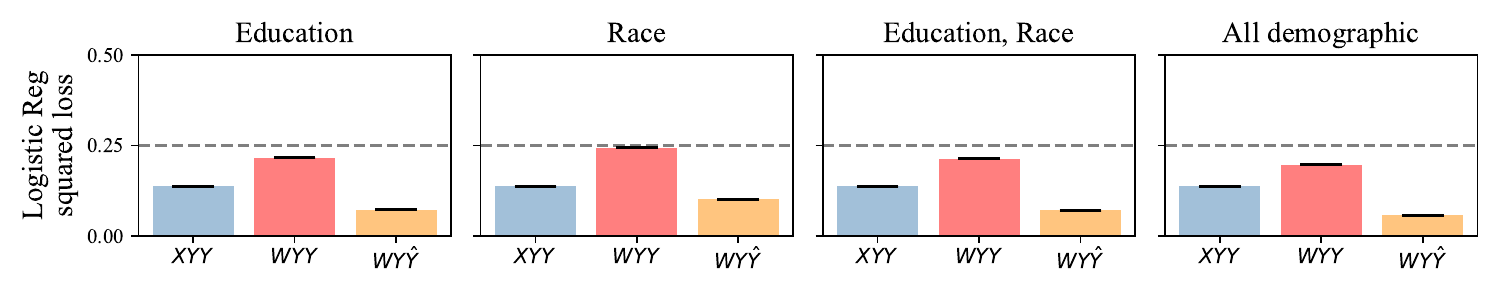}
\includegraphics[width=0.98\linewidth]{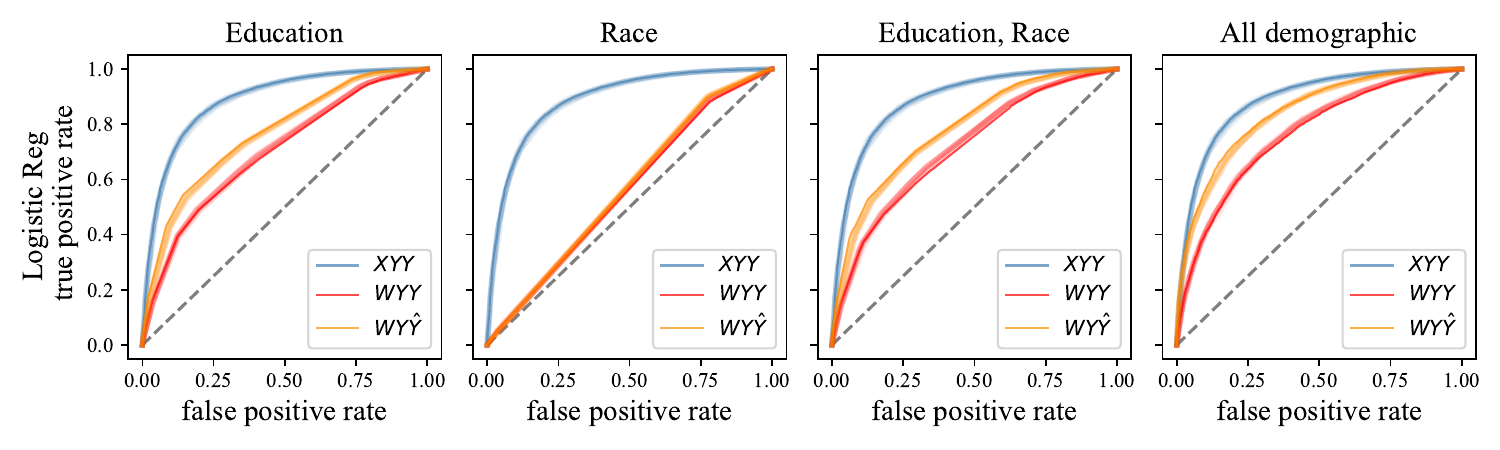}
\caption{Backward baselines on SIPP.}
\label{fig:sipp-main}
\end{figure}

\subsection{ProPublica COMPAS Recidivism Scores}

A proprietary recidivism risk score, called COMPAS, was the subject of a  notorious investigation into racial bias by ProPublica~\cite{angwin2016machine} in 2016. As part of the investigation ProPublica released a dataset of COMPAS scores about defendants associated with two-year recidivism outcomes.
The dataset released by ProPublica has significant and well-documented issues that make it inadequate for the development of new risk scores as well as fairness interventions~\cite{bao2021its, barenstein2019propublica}.
In experimenting with the COMPAS data set, our primary goal is to demonstrate the effectiveness of backward baselines in auditing problematic risk predictors.
The results of backward baselines echo earlier findings that the performance COMPAS scores can be achieved by simple models~\cite{rudin2020age, wang2022pursuit}.

Note that we do not have access to the training data used for producing the COMPAS scores as is common in algorithmic audit scenarios. This is, fortunately, not required for evaluating backward baselines. We only need the scores, as well as associated demographic information. Figure~\ref{fig:compas-main} evaluates backward baselines against the COMPAS scores. The results are rather striking in how well backward baselines do in comparison. In particular, a single feature (prior convictions) appears to account for all of the predictive power of the COMPAS score.

\begin{figure}[h]
\includegraphics[width=0.98\linewidth]{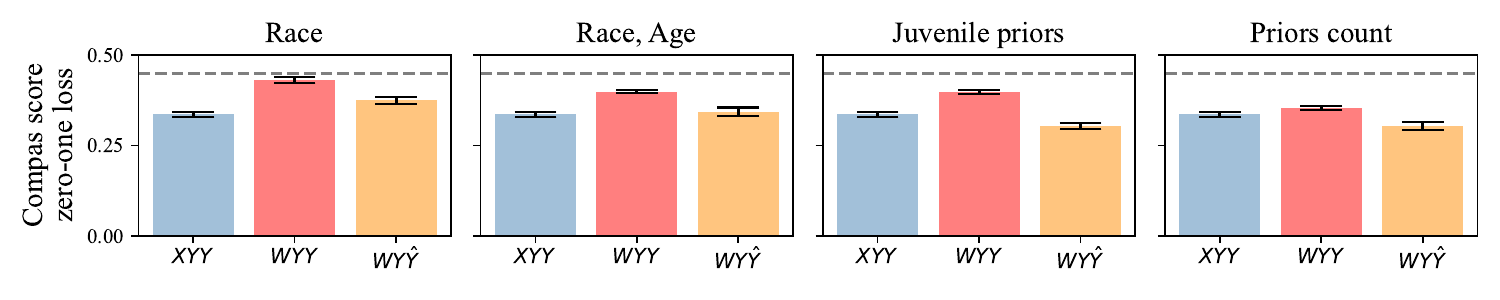}
\includegraphics[width=0.98\linewidth]{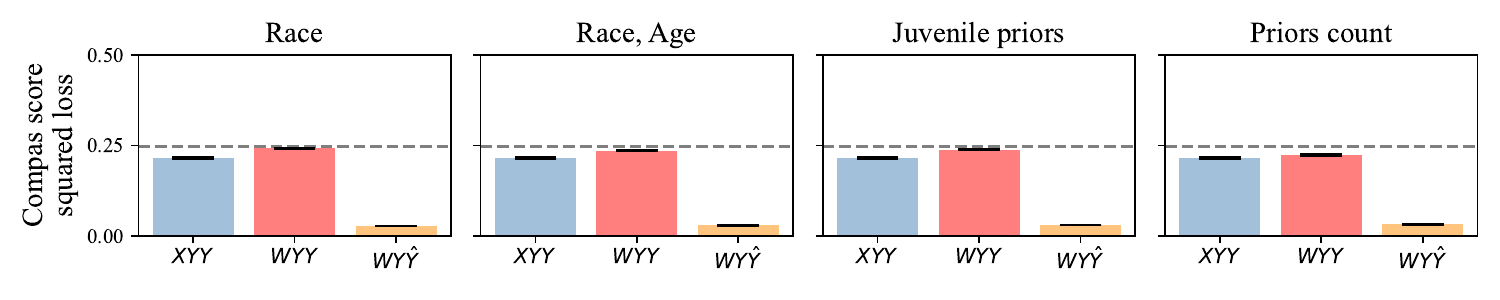}
\includegraphics[width=0.98\linewidth]{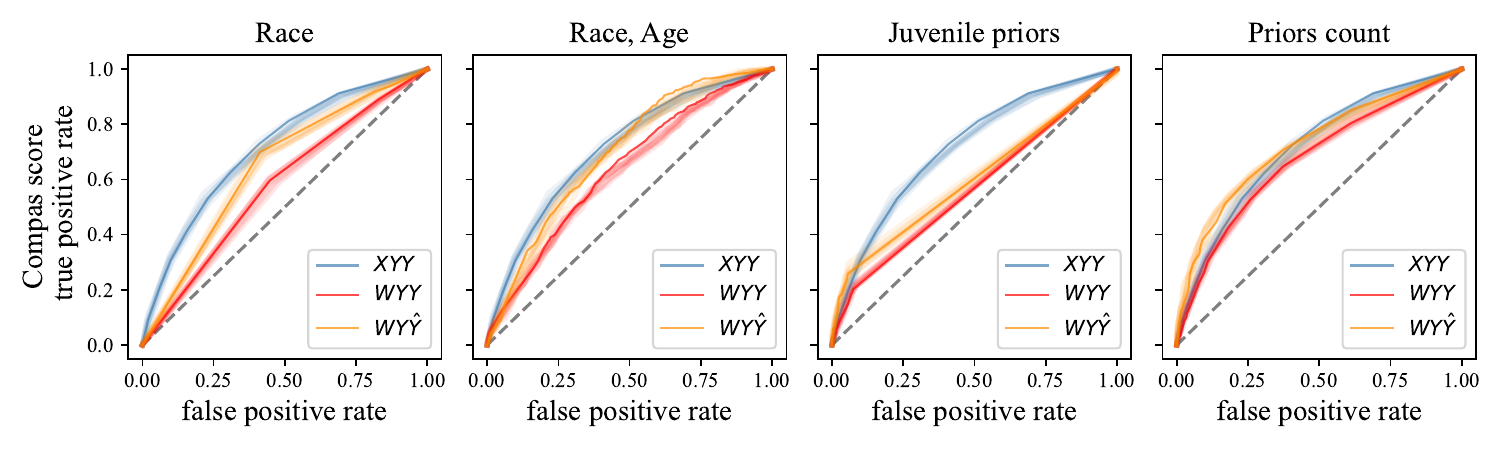}
\caption{Backward baselines on COMPAS}
\label{fig:compas-main}
\end{figure}

\clearpage
\section{Discussion}
\label{sec:discussion}
It might seem that the question we ask runs headlong into the centuries-old problem of induction: \emph{How do we  draw conclusions about the future based on past experience?}
But our study addresses a distinct and more specific question:
\emph{Does a given predictor capitalize on individual behaviors that influence the outcome or on historical patterns that correlate with the outcome?}
Backward baselines ask about induction \emph{from what}.
Initiating this investigation, we start from factors that apparently predate the point of prediction, such as demographic background variables, and test to what extent a predictor utilizes these factors.
Our findings---that backward prediction often serves a significant role in forecasting individuals' outcomes---adds relevant evidence
for the ongoing deliberation of the meaning of individual risk scores \cite{sandroni2003reproducible,dawid2017individual,dwork2021outcome,dwork2021pseudo}.

Our present investigation into backward baselines is limited to settings where the variables defining a past context $W$ are measured and observed by the auditor.
A fundamental question in evaluating backward baselines is which variables constitute the right choice for context $W$. We emphasized the role of time in deciding what is outside the individual's control.
Some factors are obviously in the past, e.g.,\ place of birth, and parents' educational attainment. Other factors, such as race, gender, and individual's educational attainment, involve the individual at present but are nonetheless socially constituted. Time alone is therefore an imperfect guiding principle in choosing what we count as a suitable background variable~$W$. Choosing $W$ appropriately is not a purely technical question, but rather is up for debate based on the context and scope of the prediction task.

\subsection{Additional Related Works}
On the level of techniques, backward baselines bear resemblance to a number of tools developed in the causal inference and machine learning communities.
Backward baselines do not make any assumptions on the underlying causal structure between $X$, $Y$, and $W$.
Still, the backward baseline toolkit is similar in ways to tools developed in settings for understanding the causal structure between variables, both for measuring confounding \cite{mcnamee2003confounding,pearl2009causality} and mediation analysis \cite{mackinnon2007mediation}.
At first glance, backward baselines may also feel similar to the study of spurious correlation, which has received considerable attention in the ML literature (e.g., \cite{sagawa2020investigation,srivastava2020robustness,veitch2021counterfactual}).
We caution, however, that correlation with background (or individual) features should not be understood as ``spurious''.
Instead, correlations with background features reveal important structure in the data distribution, how the predictor exploits this structure, and in turn, when intervening on the basis of prediction may be ineffective.

As discussed, backward baselines also add new color to concepts studied in algorithmic fairness.
In particular, under a specific causal interpretation, forward prediction may be understood similarly as the notion of Counterfactual Fairness \cite{kusner2017counterfactual}.
This connection can be made somewhat formal, as the latter notion has been shown to be closely related to the notion of Demographic Parity as well \cite{rosenblatt2022counterfactual}.
Indeed, in some accounts of fairness, understanding the causal pathways of prediction is essential \cite{kilbertus2017avoiding}.

Finally, the findings of a recent empirical study of prediction systems in the American public school system are closely related to our theoretical work.
\cite{perdomo2023difficult} studies the Early Warning System (EWS) of the Wisconsin Department of Public Instruction.
The research concludes that, largely, the individual-risk prediction sytems is (a) effective at prediction, and (b) reliant predominantly on environmental features (e.g., what percent of an individual's school qualifies for free or reduced lunch).
In this way, while the predictions are accurate, they are backward predictors, and do not provide an effective tool for intervention on individuals' educational plans.

\subsection{Conclusions}
Our contribution has a normative, a theoretical, and an empirical component.
We argue that the distinction between predicting the future of an individual and reproducing the past is central to the debate around where and how we should use statistical methods to make consequential decisions.
The effectiveness of backward prediction, when observed, should question support for prediction as policy, and instead redirect focus toward interventions that target the background conditions.

Theoretically, we begin to develop a statistical learning theory of backward baselines.
The theory helps simplify the landscape of possible backward baselines, while clarifying how to interpret different backward baselines.
A notable outcome of our theory is that it supports the use and interpretation of a backward baseline that requires no observed outcomes.
At the outset, it was not obvious that a meaningful backward baseline without measurement of the target variable is possible.
This finding enables \emph{auditing without measured outcomes}: An investigator can probe a predictive system with access to only background variables and predictions.

On the empirical side, we show the strength and versatility of backward baselines on a variety of datasets. Utilizing multiple waves of longitudinal panel surveys, our evaluation is careful about the temporality of features and outcomes. Along the way, we contribute to a better empirical understanding of how machine learning leverages past contexts to predict future life outcomes.
In conclusion, we propose backward baselines as a simple, broadly applicable tool to strengthen evaluation and audit practices in the use of machine learning.

\section*{Acknowledgments}
We thank Rediet Abebe for insightful and formative interactions throughout the course of this work. We thank Juan C.~Perdomo for helpful discussions and feedback. We thank Ricardo Sandoval for providing us with code for the SIPP data and the associated prediction task. \textbf{MPK} is supported by the Miller Institute for Research in Basic Science and the Simons Collaboration on the Theory of Algorithmic Fairness. Authors listed alphabetically.

\clearpage
\bibliographystyle{alpha}
\bibliography{main}

\newcommand{\etalchar}[1]{$^{#1}$}
\begin{thebibliography}{PZMDH20}

\bibitem[ALMK16]{angwin2016machine}
Julia Angwin, Jeff Larson, Surya Mattu, and Lauren Kirchner.
\newblock Machine bias.
\newblock In {\em Ethics of Data and Analytics}, pages 254--264. Auerbach
  Publications, 2016.

\bibitem[Bar19]{barenstein2019propublica}
Matias Barenstein.
\newblock Propublica's compas data revisited.
\newblock {\em arXiv preprint arXiv:1906.04711}, 2019.

\bibitem[BCZ{\etalchar{+}}16]{bolukbasi2016man}
Tolga Bolukbasi, Kai-Wei Chang, James~Y Zou, Venkatesh Saligrama, and Adam~T
  Kalai.
\newblock Man is to computer programmer as woman is to homemaker? debiasing
  word embeddings.
\newblock {\em Advances in neural information processing systems}, 29, 2016.

\bibitem[Ben19]{benjamin2019race}
Ruha Benjamin.
\newblock Race after technology: Abolitionist tools for the new jim code.
\newblock {\em Social forces}, 2019.

\bibitem[BG18]{buolamwini2018gender}
Joy Buolamwini and Timnit Gebru.
\newblock Gender shades: Intersectional accuracy disparities in commercial
  gender classification.
\newblock In {\em Conference on fairness, accountability and transparency},
  pages 77--91. PMLR, 2018.

\bibitem[BZZ{\etalchar{+}}21]{bao2021its}
Michelle Bao, Angela Zhou, Samantha Zottola, Brian Brubach, Sarah Desmarais,
  Aaron Horowitz, Kristian Lum, and Suresh Venkatasubramanian.
\newblock It's {COMPAS}licated: The messy relationship between rai datasets and
  algorithmic fairness benchmarks.
\newblock {\em arXiv:2106.05498}, 2021.

\bibitem[CBN17]{caliskan2017semantics}
Aylin Caliskan, Joanna~J Bryson, and Arvind Narayanan.
\newblock Semantics derived automatically from language corpora contain
  human-like biases.
\newblock {\em Science}, 356(6334):183--186, 2017.

\bibitem[Cho17]{chouldechova2017fair}
Alexandra Chouldechova.
\newblock Fair prediction with disparate impact: A study of bias in recidivism
  prediction instruments.
\newblock {\em Big data}, 5(2):153--163, 2017.

\bibitem[Daw85]{dawid}
A.~Philip Dawid.
\newblock Calibration-based empirical probability.
\newblock {\em The Annals of Statistics}, pages 1251--1274, 1985.

\bibitem[Daw17]{dawid2017individual}
Philip Dawid.
\newblock On individual risk.
\newblock {\em Synthese}, 194(9):3445--3474, 2017.

\bibitem[DHP{\etalchar{+}}12]{fta}
Cynthia Dwork, Moritz Hardt, Toniann Pitassi, Omer Reingold, and Richard Zemel.
\newblock Fairness through awareness.
\newblock In {\em Proceedings of the 3rd innovations in theoretical computer
  science conference}, pages 214--226. ACM, 2012.

\bibitem[DKR{\etalchar{+}}21]{dwork2021outcome}
Cynthia Dwork, Michael~P Kim, Omer Reingold, Guy~N Rothblum, and Gal Yona.
\newblock Outcome indistinguishability.
\newblock In {\em Proceedings of the 53rd Annual ACM SIGACT Symposium on Theory
  of Computing}, pages 1095--1108, 2021.

\bibitem[Dwo18a]{dworkin2018welfare}
Ronald Dworkin.
\newblock What is equality? part 1: Equality of welfare.
\newblock In {\em The notion of equality}, pages 81--142. Routledge, 2018.

\bibitem[Dwo18b]{dworkin2018resources}
Ronald Dworkin.
\newblock What is equality? part 2: Equality of resources.
\newblock In {\em The notion of equality}, pages 143--205. Routledge, 2018.

\bibitem[Dwo21]{dwork2021pseudo}
Cynthia Dwork.
\newblock Pseudo-randomness and the crystal ball.
\newblock In {\em Proceedings of the 2021 ACM SIGSAC Conference on Computer and
  Communications Security}, pages 1--2, 2021.

\bibitem[Eub18]{eubanks2018automating}
Virginia Eubanks.
\newblock {\em Automating inequality: How high-tech tools profile, police, and
  punish the poor}.
\newblock St. Martin's Press, 2018.

\bibitem[GKSZ22]{gopalan2022lowdegree}
Parikshit Gopalan, Michael~P Kim, Mihir Singhal, and Shengjia Zhao.
\newblock Low-degree multicalibration.
\newblock {\em arXiv preprint arXiv:2203.01255}, 2022.

\bibitem[GPSW17]{guo2017calibration}
Chuan Guo, Geoff Pleiss, Yu~Sun, and Kilian~Q Weinberger.
\newblock On calibration of modern neural networks.
\newblock In {\em International Conference on Machine Learning}, pages
  1321--1330. PMLR, 2017.

\bibitem[HKRR18]{hkrr}
{\'{U}}rsula H{\'{e}}bert{-}Johnson, Michael~P. Kim, Omer Reingold, and Guy~N.
  Rothblum.
\newblock Multicalibration: Calibration for the (computationally-identifiable)
  masses.
\newblock In {\em Proceedings of the 35th International Conference on Machine
  Learning, {ICML}}, 2018.

\bibitem[KLMO15]{kleinberg2015prediction}
Jon Kleinberg, Jens Ludwig, Sendhil Mullainathan, and Ziad Obermeyer.
\newblock Prediction policy problems.
\newblock {\em American Economic Review}, 105(5):491--95, 2015.

\bibitem[KLRS17]{kusner2017counterfactual}
Matt~J Kusner, Joshua Loftus, Chris Russell, and Ricardo Silva.
\newblock Counterfactual fairness.
\newblock {\em Advances in neural information processing systems}, 30, 2017.

\bibitem[KMR17]{kmr}
Jon~M. Kleinberg, Sendhil Mullainathan, and Manish Raghavan.
\newblock Inherent trade-offs in the fair determination of risk scores.
\newblock In {\em 8th Innovations in Theoretical Computer Science Conference,
  {ITCS}}, 2017.

\bibitem[KRCP{\etalchar{+}}17]{kilbertus2017avoiding}
Niki Kilbertus, Mateo Rojas~Carulla, Giambattista Parascandolo, Moritz Hardt,
  Dominik Janzing, and Bernhard Sch{\"o}lkopf.
\newblock Avoiding discrimination through causal reasoning.
\newblock {\em Advances in neural information processing systems}, 30, 2017.

\bibitem[Kri11]{krieger2011epidemiology}
Nancy Krieger.
\newblock {\em Epidemiology and the people's health: theory and context}.
\newblock Oxford University Press, 2011.

\bibitem[LDR{\etalchar{+}}18]{liu2018delayed}
Lydia~T Liu, Sarah Dean, Esther Rolf, Max Simchowitz, and Moritz Hardt.
\newblock Delayed impact of fair machine learning.
\newblock In {\em International Conference on Machine Learning}, pages
  3150--3158. PMLR, 2018.

\bibitem[McN03]{mcnamee2003confounding}
Roseanne McNamee.
\newblock Confounding and confounders.
\newblock {\em Occupational and environmental medicine}, 60(3):227--234, 2003.

\bibitem[MFF07]{mackinnon2007mediation}
David~P MacKinnon, Amanda~J Fairchild, and Matthew~S Fritz.
\newblock Mediation analysis.
\newblock {\em Annu. Rev. Psychol.}, 58:593--614, 2007.

\bibitem[OE16]{obermeyer2016predicting}
Ziad Obermeyer and Ezekiel~J Emanuel.
\newblock Predicting the future—big data, machine learning, and clinical
  medicine.
\newblock {\em The New England journal of medicine}, 375(13):1216, 2016.

\bibitem[PBHA23]{perdomo2023difficult}
Juan~C Perdomo, Tolani Britton, Moritz Hardt, and Rediet Abebe.
\newblock Difficult lessons on social prediction from {Wisconsin} public
  schools.
\newblock {\em arXiv preprint arXiv:2304.06205}, 2023.

\bibitem[Pea09]{pearl2009causality}
Judea Pearl.
\newblock {\em Causality}.
\newblock Cambridge university press, 2009.

\bibitem[PZMDH20]{perdomo2020performative}
Juan Perdomo, Tijana Zrnic, Celestine Mendler-D{\"u}nner, and Moritz Hardt.
\newblock Performative prediction.
\newblock In {\em International Conference on Machine Learning}, pages
  7599--7609. PMLR, 2020.

\bibitem[Roe00]{roemer2000equality}
John~E. Roemer.
\newblock {\em Equality of Opportunity}.
\newblock Harvard University Press, 2000.

\bibitem[Roe02]{roemer2002equality}
John~E Roemer.
\newblock Equality of opportunity: A progress report.
\newblock {\em Social Choice and Welfare}, pages 455--471, 2002.

\bibitem[RT16]{roemer2016equality}
John~E Roemer and Alain Trannoy.
\newblock Equality of opportunity: Theory and measurement.
\newblock {\em Journal of Economic Literature}, 54(4):1288--1332, 2016.

\bibitem[RW22]{rosenblatt2022counterfactual}
Lucas Rosenblatt and R~Teal Witter.
\newblock Counterfactual fairness is basically demographic parity.
\newblock {\em arXiv preprint arXiv:2208.03843}, 2022.

\bibitem[RWC20]{rudin2020age}
Cynthia Rudin, Caroline Wang, and Beau Coker.
\newblock The age of secrecy and unfairness in recidivism prediction.
\newblock {\em Harvard Data Science Review}, 2(1), 3 2020.
\newblock https://hdsr.mitpress.mit.edu/pub/7z10o269.

\bibitem[San03]{sandroni2003reproducible}
Alvaro Sandroni.
\newblock The reproducible properties of correct forecasts.
\newblock {\em International Journal of Game Theory}, 32(1):151--159, 2003.

\bibitem[SHL20]{srivastava2020robustness}
Megha Srivastava, Tatsunori Hashimoto, and Percy Liang.
\newblock Robustness to spurious correlations via human annotations.
\newblock In {\em International Conference on Machine Learning}, pages
  9109--9119. PMLR, 2020.

\bibitem[SRKL20]{sagawa2020investigation}
Shiori Sagawa, Aditi Raghunathan, Pang~Wei Koh, and Percy Liang.
\newblock An investigation of why overparameterization exacerbates spurious
  correlations.
\newblock In {\em International Conference on Machine Learning}, pages
  8346--8356. PMLR, 2020.

\bibitem[VDYE21]{veitch2021counterfactual}
Victor Veitch, Alexander D'Amour, Steve Yadlowsky, and Jacob Eisenstein.
\newblock Counterfactual invariance to spurious correlations in text
  classification.
\newblock {\em Advances in neural information processing systems},
  34:16196--16208, 2021.

\bibitem[WHPR22]{wang2022pursuit}
Caroline Wang, Bin Han, Bhrij Patel, and Cynthia Rudin.
\newblock In pursuit of interpretable, fair and accurate machine learning for
  criminal recidivism prediction.
\newblock {\em Journal of Quantitative Criminology}, pages 1--63, 2022.

\end{thebibliography}

\clearpage

\appendix

\section{Omitted Proofs}
\label{app:proofs}

\begin{proposition*}[Restatement of Proposition~\ref{prop:basic}]
The following properties of backward baselines hold.

\emph{(a)}~~ When $X$ encodes $W$, there exists a predictor $h^*:\Xcal \to \Ycal$ that achieves loss at most the backward prediction baseline.
\begin{gather*}
    \ell_\Dist(Y,h^*(X)) \le \ell_\Dist(Y,\gs(W))
\end{gather*}
\emph{(b)}~~If $h:\Xcal \to \Ycal$ is a backward predictor, then its loss is at least the backward baselines.
\begin{align*}
    \ell_{\Dist}(Y,\gs(W)) \le \ell_\Dist(Y,h(X))\quad\text{and}\quad
    \ell_\Dist(h(X),\gh(W)) \le \ell_\Dist(Y,h(X))
\end{align*}
\emph{(c)}~~ If $h:\Xcal \to \Ycal$ is a forward predictor, then $\gh$ is comparable to a constant predictor. Formally,
\begin{align*}
    \ell_\Dist(h(X),\gh(W)) \ge \argmin_{\yhat \in \Ycal}~ \E[\ell(h(X),\yhat)]\quad\text{and}\quad
    \ell_\Dist(Y,\gh(W)) \ge \argmin_{\yhat \in \Ycal}~ \E[\ell(Y,\yhat)]\,.
\end{align*}
\end{proposition*}
\begin{proof}[Proof of Proposition~\ref{prop:basic}]
We prove each statement separately.

\emph{(a)}~~By the assumption that $X$ encodes $W$, i.e.,\ that $I(W;X) = H(W)$, there exists a computable map $M:\Xcal \to \Wcal$ such that for any $(X,W,Y) \sim \Dist$, $M(X) = W$.
Thus, the predictor $h^*:\Xcal \to \Ycal$ defined as the composition of $\gs$ and $M$,
\begin{gather*}
    h^*(X) = \gs \circ M(X) = \gs(W)
\end{gather*}
is feasible, and achieves loss $\ell_\Dist(Y,h^*(X)) = \ell_\Dist(Y,\gs(W))$.

\emph{(b)}~~Suppose $h$ is a backward predictor; that is, $h(X) \bot Y \vert W$.
Consider the loss achieved by $h$ on $\Dist$.
\begin{align*}
\ell_\Dist(Y,h(X)) &= \E[\ell(Y,h(X))]\\
&= \E_W\E[\ell(Y,h(X)) \vert W]
\end{align*}
Note that by the conditional independence of $h(X)$ and $Y$, we can take the expectation over $X$ and $Y$ conditioned on $W$ separately.
Then, the expected loss over the choice of $h(X) \in \Ycal$ is lower bounded by the optimal choice $\yhat \in \Ycal$.
\begin{gather*}
    \E_{h(X)\vert W}\E_{Y \vert W}[\ell(Y,h(X)) \vert W] \ge \min_{\yhat \in \Ycal}~ \E_{Y \vert W}[\ell(Y,\yhat) \vert W] = \E_{Y \vert W}[\ell(Y,\gs(W)) \vert W]
\end{gather*}
Thus, in all, we conclude that $\ell_\Dist(Y,\gs(W)) \le \ell_\Dist(Y,h(X))$.
The second inequality follows similarly, by lower bounding the expected loss over the draw of $Y$ by the optimal $\hat{y} \in \Ycal$, which results in $\gh(W)$.

\emph{(c)}~~Suppose $h$ is a forward predictor; that is $h(X) \bot W$.
Consider the definition of $\gh$,
\begin{align*}
    \gh(W) &= \argmin_{\yhat \in \Ycal}~\E[\ell(h(X),\yhat) \vert W]\\
    &= \argmin_{\yhat \in \Ycal}~\E[\ell(h(X),\yhat)]
\end{align*}
where the equality between the conditional and unconditional expectation follows by independence.
Thus, $\gh:\Wcal \to \Ycal$ must be a constant predictor, and can only hope to compete with the best fixed prediction in predicting $Y$ or $h(X)$.
\end{proof}


\begin{proposition*}[Formal restatement of Proposition~\ref{prop:gh-equals-gs}]
Suppose a classifier $h:\Xcal \to \set{0,1}$ is confident on $Y$ over $W$.
Then,
\begin{gather*}
\gs(W) = \argmax_{\yhat \in \Ycal} \Pr[Y = \yhat \vert W] = \argmax_{\yhat \in \Ycal} \Pr[h(X) = \yhat \vert W] = \gh(W).
\end{gather*}
Suppose a predictor $h:\Xcal \to [0,1]$ is weakly calibrated to $Y$ over $W$.
Then,
\begin{gather*}
\gs(w) = \E[Y \vert W] = \E[h(X) \vert W] = \gh(W).
\end{gather*}
\end{proposition*}
\begin{proof}[Proof of Proposition~\ref{prop:gh-equals-gs}]
First, we prove the equality for classifiers.
Note that minimization is entirely determined by which side of $1/2$ the probability that the outcome is $1$.
That is, $\argmin_{\yhat \in \Ycal}~ \Pr[Y \neq \yhat \vert W]$ returns the indicator of whether $\Pr[Y = 1 \vert W] > 1/2$.
Thus, as long as $\Pr[Y =1 \vert W]$ and $\Pr[h(X) = 1 \vert W]$ are on the same side of $1/2$, then the equality follows.
By confidence, if $\gs(W) = 1$, then $1/2 < \Pr[Y = 1 \vert W] \le \Pr[h(X) = 1 \vert W]$, so $\gh(W) = 1$ as well.
The statement holds analogously for the case $\gs(W) = 0$.

Next, we prove the equality for regression predictors.
By the definition of weak calibration, we have that $h$ matches expectations with $Y$ conditional on $W$.
\begin{gather*}
    \E[h(X) \vert W] = \E[Y \vert W].
\end{gather*}
Thus, by the closed-form solution for $\gs$ and $\gh$, we have the stated equality.
\begin{align*}
\gs(W) &= \E[Y \vert W]\\
&= \E[h(X) \vert W]\\
&= \gh(W)
\end{align*}
\end{proof}

\begin{proposition*}[Restatement of Proposition~\ref{prop:cov}]
Suppose a classifier $h:\Xcal \to \set{0,1}$ is confident on $Y$ over $W$.
Let $\ell_W(h,\gh)$ denote the backward rounding baseline $\Pr[h(X) \neq \gh(W) \vert W]$ conditioned on $W$.  Then,
\begin{gather*}
    \Cov(h(X),Y \vert W) \le \Var(h(X) \vert W) = \ell_W(h,\gh) \cdot (1-\ell_W(h,\gh)) \le \Var(Y \vert W)
\end{gather*}
If a predictor $h:\Xcal \to [0,1]$ is weakly calibrated to $Y$ over $W$, then
\begin{gather*}
    \E[(h(X) - \gh(W))^2] = \E[(Y - \gs(W))^2] - \E[(Y - h(X))^2] = \E_W[\Cov(Y,h(X) \vert W)].
\end{gather*}
\end{proposition*}

\begin{proof}[Proof of Proposition~\ref{prop:cov}]
Suppose $h:\Xcal \to \set{0,1}$ is confident on $Y$ over $W$.
First, the covariance $\Cov(h(X),Y \vert W)$ is upper bounded by the variance $\Var(h(X) \vert W)$.
Then, we express the variance of this Bernoulli random variable in terms of the backward rounding baseline.
Specifically, for either $\yhat \in \set{0,1}$:
\begin{align*}
\Var(h(X) \vert W) &= \Pr[h(X) = \yhat \vert W] \cdot \Pr[h(X) \neq \yhat \vert W]\\
&= \Pr[h(X) \neq \gh(W) \vert W] \cdot (1-\Pr[h(X) \neq \gh(W) \vert W]).
\end{align*}
Further, by confidence and Proposition~\ref{prop:gh-equals-gs}, we can bound the probabilities.
\begin{align*}
\Pr[h(X) \neq \gh(W) \vert W] &\le \Pr[Y \neq \gh(W) \vert W]\\
&= \Pr[Y \neq \gs(W) \vert W]\\
&\le 1/2
\end{align*}
By properties of variance of Bernoullis, $h(X)$ given $W$ is more peaked than $Y$ given $W$, so will have lower variance.

Given a weakly calibrated predictor $h:\Xcal \to [0,1]$, we expand the difference in squared loss as follows.
\begin{align*}
\E[(Y - \gh(W))^2] - \E[(Y - h(X))^2]
&= 2 \E[Y (h(X) - \gh(W))] - \E[h(X)^2 - \gh(W)^2]
\end{align*}
By the fact that $\gh(W) = \E[h(X) \vert W]$, the second term can be rewritten as the squared error between $\gh$ and $h$.
\begin{align*}
\E[h(X)^2 - \gh(W)^2] &= \E_W\E[h(X)^2 - \gh(W)^2 \vert W]\\
&= \E_W[\E[h(X)^2 \vert W] - \gh(W)^2]\\
&= \E_W[\E[h(X)^2 \vert W] - 2\gh(W)\E[h(X) \vert W] + \gh(W)^2]\\
&= \E[(h(X) - \gh(W))^2]
\end{align*}
The first term can be rewritten as the expected covariance between $Y$ and $h(X)$ conditioned on $W$.
\begin{align*}
\E[Y ((h(X) - \gh(W))]
&= \E_W\E[Y ((h(X) - \gh(W)) \vert W]\\
&= \E_W[\E[Y h(X) \vert W] - \E[Y \vert W] \gh(W)]\\
&= \E_W[\E[Y h(X) \vert W] - \E[Y \vert W] \E[h(X) \vert W]]\\
&= \E_W[\Cov(Y,h(X) \vert W)]
\end{align*}
In sum, the difference in losses is equal to
\begin{gather*}
\E[(Y-\gh(W))^2] - \E[(Y - h(X))^2] = 2\E_W[\Cov(Y,h(X) \vert W] - \E[(h(X) - \gh(W))^2]
\end{gather*}
Finally, if $h$ is weakly calibrated to $Y$ over $W$, then the expected covariance is equal to the squared of $\gh$ to $h$,
\begin{align*}
\E_W[\Cov(Y,h(X) \vert W)] &= \E_W[\E[Y h(X) \vert W] - \E[Y \vert W] \E[h(X) \vert W]]\\
&= \E_W[\E[h(X)^2 \vert W] - \gh(W)^2] \addtag \label{proof:eqn:cov:calib}\\
&= \E[(h(X) - \gh(W))^2]
\end{align*}
where (\ref{proof:eqn:cov:calib}) follows by the assumption that $h$ is weakly calibrated to $Y$ over $W$.
Thus, the difference in losses simplifies to the squared difference between $\gh$ and $h$.
\end{proof}

\paragraph{An alternative backward baseline for classification covariance.}

We present an additional backward baseline for classifiers that may be of interest when an underlying score function is not available.
In this baseline, we manipulate the distribution over outcomes, leaving the predictions fixed.
Specifically, given a sample $(X,W,Y) \sim \Dist$, we resample the outcome $\Yt \sim \Dist_{Y \vert W}$, ensuring that $h(X)$ and $\Yt$ are conditionally independent given $W$.
We show that for the zero-one loss, the difference between the backward baseline $\ell_\Dist(\Yt,h(X))$ and $\ell(Y,h(X))$ is proportional to the expected conditional covariance of $Y$ and $h(X)$ given $W$.
\begin{proposition}
\label{prop:alternate:cov}
For any classifier $h:\Xcal \to \set{0,1}$,
\begin{gather*}
    \Pr[\Yt \neq h(X)] - \Pr[Y \neq h(X)] = 2 \E_W[\Cov(Y,h(X) \vert W)]
\end{gather*}
\end{proposition}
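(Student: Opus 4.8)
The plan is to exploit the fact that for $\set{0,1}$-valued random variables the zero-one loss coincides with the squared loss, so that $\Pr[Y \neq h(X)] = \E[(Y - h(X))^2]$ and likewise $\Pr[\Yt \neq h(X)] = \E[(\Yt - h(X))^2]$. Expanding the squares and using $Y^2 = Y$, $\Yt^2 = \Yt$, and $h(X)^2 = h(X)$, both losses decompose into first moments plus a cross term, and the first-moment terms will cancel in the difference, leaving only the cross terms $\E[Yh(X)]$ and $\E[\Yt h(X)]$ to compare.

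Concretely, I would first write $\Pr[Y \neq h(X)] = \E[Y] + \E[h(X)] - 2\E[Yh(X)]$ and $\Pr[\Yt \neq h(X)] = \E[\Yt] + \E[h(X)] - 2\E[\Yt h(X)]$. Since $\Yt$ is drawn from $\Dist_{Y \mid W}$, its conditional law given $W$ agrees with that of $Y$; in particular $\E[\Yt \mid W] = \E[Y \mid W]$ and hence $\E[\Yt] = \E[Y]$. Therefore the difference collapses to $\Pr[\Yt \neq h(X)] - \Pr[Y \neq h(X)] = 2\big(\E[Yh(X)] - \E[\Yt h(X)]\big)$.

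Next I would evaluate the two cross terms by conditioning on $W$. By the tower property, $\E[Yh(X)] = \E_W\big[\E[Yh(X) \mid W]\big]$. For $\Yt$, the construction guarantees $\Yt \bot h(X) \mid W$, so $\E[\Yt h(X) \mid W] = \E[\Yt \mid W]\,\E[h(X) \mid W] = \E[Y \mid W]\,\E[h(X) \mid W]$, using once more that $\Yt$ and $Y$ share the same $W$-conditional mean. Substituting and combining the two expectations gives $\E[Yh(X)] - \E[\Yt h(X)] = \E_W\big[\E[Yh(X) \mid W] - \E[Y \mid W]\,\E[h(X) \mid W]\big] = \E_W[\Cov(Y,h(X) \mid W)]$, and multiplying by $2$ yields the stated identity.

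There is no serious obstacle; the argument is elementary. The only points needing care are (i) recognizing that the zero-one loss equals the squared loss for $\set{0,1}$-valued variables, which is what linearizes the computation, and (ii) being precise that the resampled $\Yt$ is used in two ways — it is conditionally independent of $h(X)$ given $W$ \emph{and} it has the same $W$-conditional distribution as $Y$ — and that both properties are invoked. It is also worth noting that, unlike Proposition~\ref{prop:cov}, this derivation requires no confidence or calibration assumption on $h$; the tradeoff is that one must be able to resample from $\Dist_{Y \mid W}$, which presupposes access to observed labels $Y$.
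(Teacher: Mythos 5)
Your proof is correct and follows essentially the same route as the paper's: expand the zero-one loss via the binary identity $\Pr[Y \neq h(X)] = \E[Y + h(X) - 2Yh(X)]$, cancel the first moments using $\E[\Yt \vert W] = \E[Y \vert W]$, and factor $\E[\Yt h(X) \vert W]$ by the conditional independence of $\Yt$ and $h(X)$ given $W$ to recover the conditional covariance. Your explicit note that both properties of $\Yt$ (matching conditional law and conditional independence) are used is a fair point of care, but there is no substantive difference from the paper's argument.
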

\begin{proof}
We expand the difference in zero-one loss by exploiting the identity that $\Pr[Y \neq h(X)] = \E[Y + h(X) - 2 Y h(X)]$ for binary $Y$ and $h(X)$, and using the fact that $\E[\Yt\vert W] = \E[Y \vert W]$.
\begin{align*}
\Pr[\Yt \neq h(X)] - \Pr[Y \neq h(X)]
&= 2 \left(\E[Y h(X)] - \E[\Yt h(X)]\right)\\
&= 2 \E_W[\E[Y h(X) \vert W] - \E[\Yt h(X) \vert W]]\\
&= 2 \E_W[\E[Y h(X) \vert W] - \E[Y\vert W] \E[h(X) \vert W]] \addtag \label{proof:eqn:cov:01}\\
&= 2 \E_W[\Cov(Y,h(X)\vert W)]
\end{align*}
where (\ref{proof:eqn:cov:01}) follows by the fact that $\Yt \sim \Dist_{Y \vert W}$ is sampled conditionally independently from the distribution on $Y$ given $W$.
\end{proof}

\section{Backward Prediction and Demographic Parity}
\label{app:fairness}
While conceived from different vantages, backward baselines and fair machine learning share similarities in perspective and technical structure.
On a technical level, pure forward prediction is equivalent to demographic parity, a notion of fairness introduced by \cite{fta}.
Based on this observation, certain insights about backward baselines have an analogue in fair prediction, and vice versa.
For instance, we note that in combination, Proposition~\ref{prop:basic} and Proposition~\ref{prop:gh-equals-gs} imply that forward predictors cannot be calibrated to $Y$ over $W$.
Translating this observation into the language of fairness in prediction, we recover a specific case of the well-known results on the incompatibility of calibration and parity-based definitions of fairness in prediction~\cite{kmr,chouldechova2017fair}.

In addition to giving insight into the backward rounding baseline, Proposition~\ref{prop:cov} shows a formal sense in which forward and backward predictors are orthogonal to one another.
In particular, for weakly calibrated regression predictors $h:\Xcal \to [0,1]$,
\begin{gather*}
    \E[(Y - \gh(W))^2] = \E[(Y - h(X))^2] + \E[(h(X) - \gh(W))^2]
\end{gather*}
is a sort of Pythagorean theorem, stating that the variation in $Y$ after accounting for $\gh(W)$ can be broken into the variation in $Y$ given $h(X)$ and the variation in $h(X)$ given $\gh(W)$.

Connecting the backward baselines framework to fairness in prediction suggests a simple algorithm for learning predictors satisfying demographic parity, that relies only on unconstrained learning primitives.
First, we learn to predict $Y$ as $h(X)$; then, we learn to predict $h(X)$ as $\gh(W)$; finally, we return $f_\alpha(X,W)$ defined as
$$f_\alpha(X,W) = h(X) - \alpha \gh(W)$$
for $\alpha \in [0,1]$.
Taking $\alpha = 1$ achieves a relaxed first-order demographic parity.
Specifically, $f_1(X,W)$ has a constant expectation over all $W$.
\begin{gather*}
    \E[f_1(X,W) \vert W] = \E[h(X) - \gh(W) \vert W] = \E[h(X)\vert W] - \gh(W) = 0
\end{gather*}
In effect, $f_1(X,W)$ predicts optimally according to $X$ then removes all variation that can be accounted for through $W$.
Other choices of $\alpha$ may be interesting to interpolate between forward and backward prediction modes.

\section{Details on Empirical Evaluation}
\label{app:experiments}
In this section, we show all figures for all baselines, classifiers, and metrics that we considered. We also provide additional details on the data sources, feature engineering, and target variable creation.

In all bar plots, the height of the bar is the mean value from $10$ different random seeds and error bars indiciate a standard deviation across $10$ different random seeds. In the case of ROC curves, the plot shows $10$ curves overlaid from $10$ different random seeds. None of the experiments require significant compute resources.

Given features $X$, context $W$, a given predictor $\hat Y$, and target variable $Y$, our plots evaluate five different methods:

\begin{itemize}
    \item $XYY$ : Train on $(X, Y)$, test model on $(X, Y)$
    \item $WYY$ : Train baseline on $(W, Y)$, test baseline on $(W, Y)$ (backward prediction baseline)
    \item $W\hat YY$ : Train baseline on $(W, \hat Y)$, test baseline on $(W, Y)$ (equivalent to backward prediction baseline)
    \item $WY\hat Y$ : Train baseline on $(W, Y)$, test baseline on $(W, \hat Y)$ (equivalent to backward rounding baseline)
    \item $W\hat Y\hat Y$ : Train baseline on $(W, \hat Y)$, test baseline $(W, \hat Y)$ (backward rounding baseline)
\end{itemize}

In the main body of the paper we included only two baselines and ommitted the equivalent ones. 

\paragraph{Models.} We use three standard models available in the Python {\tt sklearn} package. We do no or only minimal hyperparameter tuning:
\begin{itemize}
    \item Gradient boosting: {\tt GradientBoostingClassifier()}
    \item Random Forests: {\tt RandomForestClassifier()}
    \item Logistic regression:\\ {\tt make\_pipeline(StandardScaler(), LogisticRegression(max\_iter=1000, tol=0.1))}
\end{itemize}
It is possible that other model families achieve better accuracy. However, on the kind of tabular data we experiment with ensemble methods such as random forests or gradient boosting tend to achieve state-of-the-art performance. We include a reference implementation of these five methods in Section~\ref{sec:code}.

\subsection{Medical Expenditure Panel Survey (MEPS)}

For extensive documentation and background on this survey, see: \url{https://www.meps.ahrq.gov/mepsweb/}

\paragraph{Data sources and use conditions.}
Our dataset is constructed from the 2019 MEPS data. The MEPS 2019 Full Year Consolidated Data File (HC-216) is available online at \url{https://meps.ahrq.gov/mepsweb/data_stats/download_data_files_detail.jsp?cboPufNumber=HC-216}. The same website contains extensive documentation regarding features and data collection. The MEPS data use agreement is available online: \url{https://meps.ahrq.gov/data_stats/download_data/pufs/h216/h216doc.shtml#DataA}.

\paragraph{Features.}
Features for Round 3 of Panel 23 and Round 1 of Panel 24 have a suffix of 31 or 31X in the data. We include all of these in the dataset, as well as all demographic features:
FCSZ1231, FCRP1231, RUSIZE31, RUCLAS31, FAMSZE31, FMRS1231, FAMS1231, REGION31, REFPRS31, RESP31, PROXY31, BEGRFM31, BEGRFY31, ENDRFM31, ENDRFY31, INSCOP31, INSC1231, ELGRND31, PSTATS31, SPOUID31, SPOUIN31, ACTDTY31, RTHLTH31, MNHLTH31, CHBRON31, ASSTIL31, ASATAK31, ASTHEP31, ASACUT31, ASMRCN31, ASPREV31, ASDALY31, ASPKFL31, ASEVFL31, ASWNFL31, IADLHP31, ADLHLP31, AIDHLP31, WLKLIM31, LFTDIF31, STPDIF31, WLKDIF31, MILDIF31, STNDIF31, BENDIF31, RCHDIF31, FNGRDF31, ACTLIM31, WRKLIM31, HSELIM31, SCHLIM31, UNABLE31, SOCLIM31, COGLIM31, VACTDY31, VAPRHT31, VACOPD31, VADERM31, VAGERD31, VAHRLS31, VABACK31, VAJTPN31, VARTHR31, VAGOUT31, VANECK31, VAFIBR31, VATMD31, VAPTSD31, VALCOH31, VABIPL31, VADEPR31, VAMOOD31, VAPROS31, VARHAB31, VAMNHC31, VAGCNS31, VARXMD31, VACRGV31, VAMOBL31, VACOST31, VARECM31, VAREP31, VAWAIT31, VALOCT31, VANTWK31, VANEED31, VAOUT31, VAPAST31, VACOMP31, VAMREC31, VAGTRC31, VACARC31, VAPROB31, VACARE31, VAPACT31, VAPCPR31, VAPROV31, VAPCOT31, VAPCCO31, VAPCRC31, VAPCSN31, VAPCRF31, VAPCSO31, VAPCOU31, VAPCUN31, VASPCL31, VASPMH31, VASPOU31, VASPUN31, VACMPM31, VACMPY31, VAPROX31, EMPST31, RNDFLG31, MORJOB31, HRWGIM31, HRHOW31, DIFFWG31, NHRWG31, HOUR31, TEMPJB31, SSNLJB31, SELFCM31, CHOIC31, INDCAT31, NUMEMP31, MORE31, UNION31, NWK31, STJBMM31, STJBYY31, OCCCAT31, PAYVAC31, SICPAY31, PAYDR31, RETPLN31, BSNTY31, JOBORG31, OFREMP31, CMJHLD31, MCRPD31, MCRPB31, MCRPHO31, MCDHMO31, MCDMC31, PRVHMO31, FSAGT31, HASFSA31, PFSAMT31, MCAID31, MCARE31, GOVTA31, GOVAAT31, GOVTB31, GOVBAT31, GOVTC31, GOVCAT31, VAPROG31, VAPRAT31, IHS31, IHSAT31, PRIDK31, PRIEU31, PRING31, PRIOG31, PRINEO31, PRIEUO31, PRSTX31, PRIV31, PRIVAT31, VERFLG31, DENTIN31, DNTINS31, PMEDIN31, PMDINS31, PMEDUP31, PMEDPY31, AGE31X, MARRY31X, FTSTU31X, REFRL31X, MOPID31X, DAPID31X, HRWG31X, DISVW31X, HELD31X, OFFER31X, TRIST31X, TRIPR31X, TRIEX31X, TRILI31X, TRICH31X, MCRPD31X, TRICR31X, TRIAT31X, MCAID31X, MCARE31X, MCDAT31X, PUB31X, PUBAT31X, INS31X, INSAT31X, SEX, RACEV1X, RACEV2X, RACEAX, RACEBX, RACEWX, RACETHX, HISPANX, HISPNCAT, EDUCYR, HIDEG, OTHLGSPK, HWELLSPK, BORNUSA, WHTLGSPK, YRSINUS

\paragraph{Demographic features.}
The full list of demographic features we use is:
\begin{itemize}
\item AGE31X
\item SEX
\item RACEV1X
\item RACEV2X
\item RACEAX
\item RACEBX
\item RACEWX
\item RACETHX
\item HISPANX
\item HISPNCAT
\item EDUCYR
\item HIDEG
\item OTHLGSPK
\item HWELLSPK
\item BORNUSA
\item WHTLGSPK
\item YRSINUS
\end{itemize}

\paragraph{Target variable.}
We construct the target variable by summing up the following features:
\begin{itemize}
\item OBTOTV19 --- NUMBER OF OFFICE-BASED PROVIDER VISITS 2019
\item OPTOTV19 --- NUMBER OF OUTPATIENT DEPT PROVIDER VISITS 2019
\item ERTOT19 --- NUMBER OF EMERGENCY ROOM VISITS 2019
\item IPNGTD19 --- NUMBER OF NIGHTS IN HOSP FOR DISCHARGES, 2019
\item HHTOTD19 ---  NUMBER OF HOME HEALTH PROVIDER DAYS 2019
\end{itemize}
We label all instances \emph{positive} ($1$) where the sum is strictly greater than $3$. We label all other instances \emph{negative} ($0$). This leads to 53.17\% positive instances. Hence, an all ones predictor achieves 46.83\% classification error.

\paragraph{Full set of figures.} Figure~\ref{fig:meps-zeroone-full} shows all results for the zero-one loss, Figure~\ref{fig:meps-squared-full} for the squared loss, and Figure~\ref{fig:meps-roc-full} for ROC curves.

\begin{figure}
\includegraphics[width=0.98\linewidth]{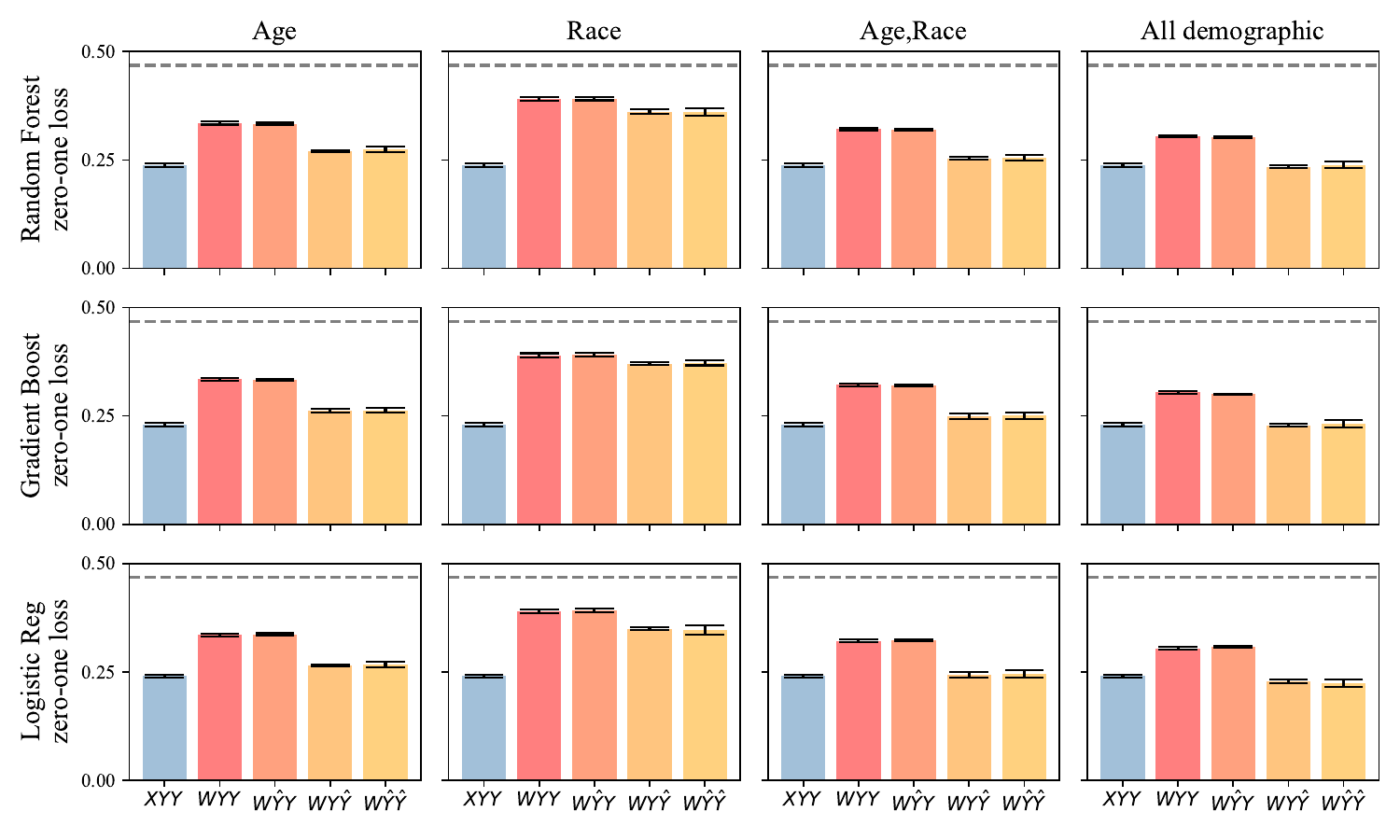}
\caption{Baselines on MEPS for varying features and classifiers (zero-one loss)}
\label{fig:meps-zeroone-full}
\end{figure}

\begin{figure}
\includegraphics[width=0.98\linewidth]{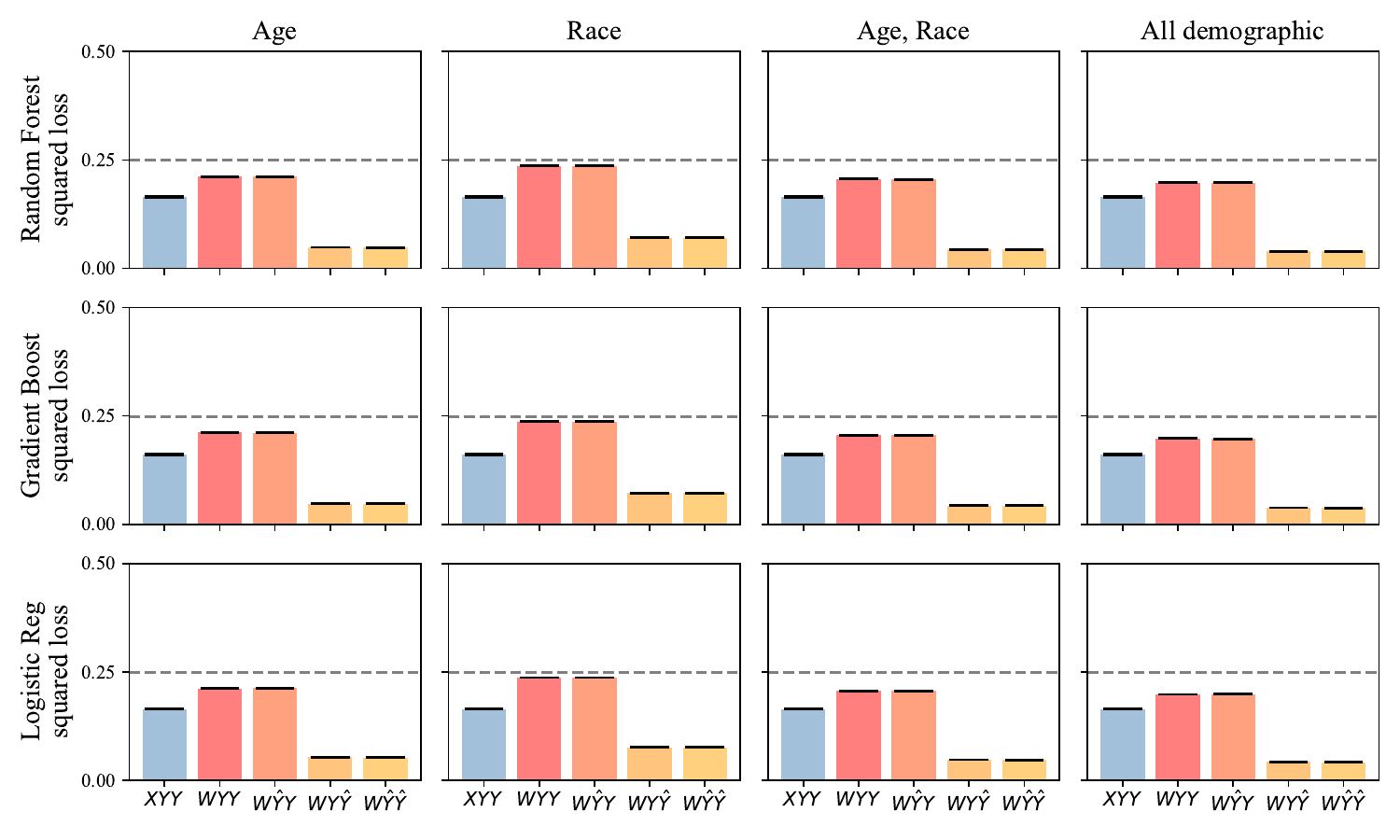}
\caption{Baselines on MEPS for varying features and classifiers (squared loss)}
\label{fig:meps-squared-full}
\end{figure}

\begin{figure}
\includegraphics[width=0.98\linewidth]{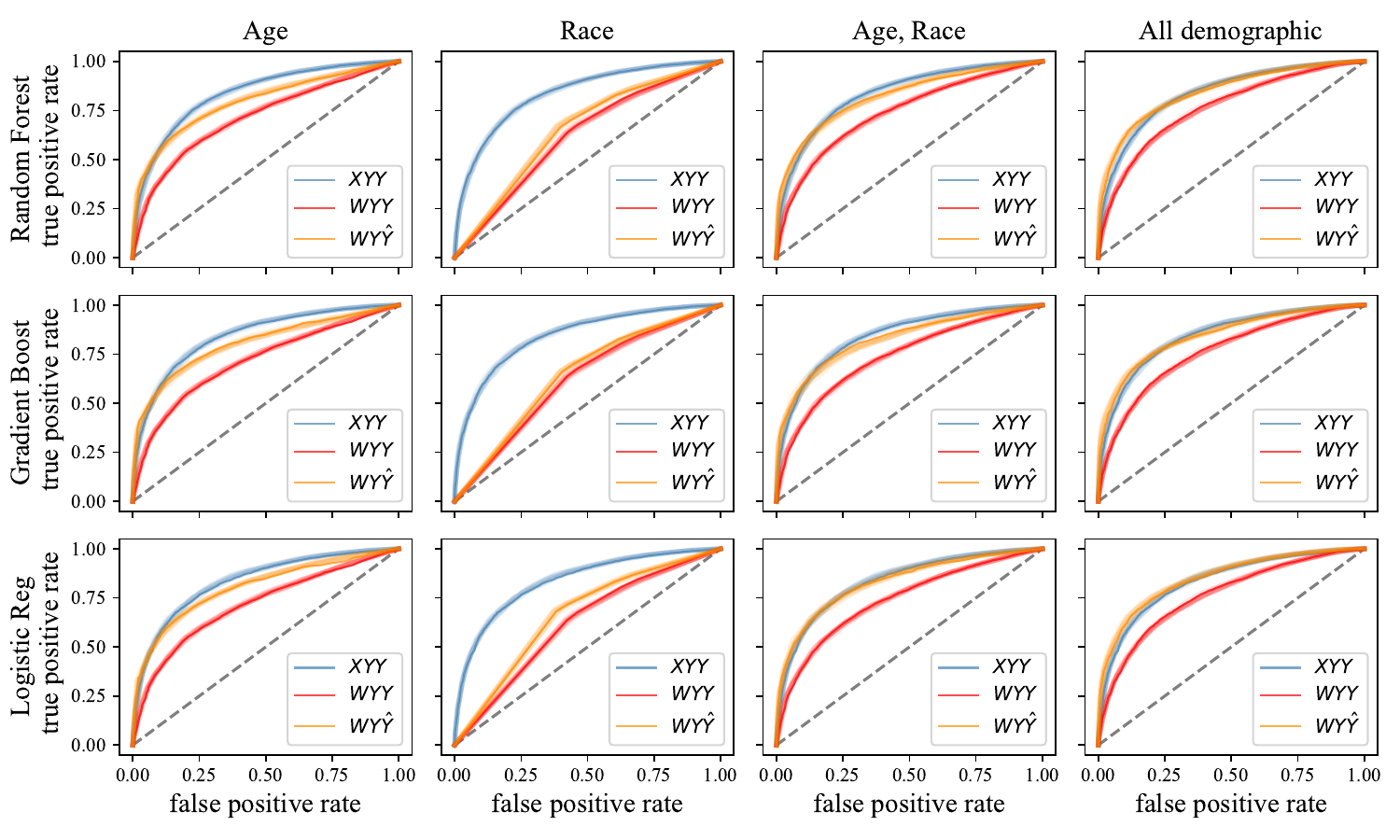}
\caption{Baselines on MEPS for varying features and classifiers (ROC curves)}
\label{fig:meps-roc-full}
\end{figure}

\subsection{Survey of Income and Program Participation (SIPP)}

Extensive documentation and background information on this survey is available from the websites of the US Census Bureau: \url{https://www.census.gov/programs-surveys/sipp.html}

\paragraph{Data availability and conditions.} 
The SIPP data provided by the US Census Bureau are in the public domain. We use the first two waves of the SIPP 2014 Panel data, available here:
\begin{itemize}
    \item Wave 1: \url{https://www.census.gov/programs-surveys/sipp/data/datasets/2014-panel/wave-1.html}
    \item Wave 2: \url{https://www.census.gov/programs-surveys/sipp/data/datasets/2014-panel/wave-2.html}
\end{itemize}

\paragraph{Features.}
The dataset we derive from the 2014 SIPP panel data uses a set of $50$ variables constructed from one or multiple variables appearing in the SIPP raw data in Wave 1. The list below shows each feature we use (in capital letters) followed by the original SIPP feature(s) it is derived from.
\begin{itemize}
\item  LIVING\_QUARTERS\_TYPE : tlivqtr
\item  LIVING\_OWNERSHIP : etenure
\item  SNAP\_ASSISTANCE : efs
\item  WIC\_ASSISTANCE : ewic
\item  MEDICARE\_ASSISTANCE : emc
\item  MEDICAID\_ASSISTANCE : emd
\item  HEALTHDISAB : edisabl
\item  DAYS\_SICK : tdaysick
\item  HOSPITAL\_NIGHTS : thospnit
\item  PRESCRIPTION\_MEDS : epresdrg
\item  VISIT\_DENTIST\_NUM : tvisdent
\item  VISIT\_DOCTOR\_NUM : tvisdoc
\item  HEALTH\_INSURANCE\_PREMIUMS : thipay
\item  HEALTH\_OVER\_THE\_COUNTER\_PRODUCTS\_PAY : totcmdpay
\item  HEALTH\_MEDICAL\_CARE\_PAY : tmdpay
\item  HEALTH\_HEARING : ehearing
\item  HEALTH\_SEEING : eseeing
\item  HEALTH\_COGNITIVE : ecognit
\item  HEALTH\_AMBULATORY : eambulat
\item  HEALTH\_SELF\_CARE : eselfcare
\item  HEALTH\_ERRANDS\_DIFFICULTY : eerrands
\item  HEALTH\_CORE\_DISABILITY : rdis
\item  HEALTH\_SUPPLEMENTAL\_DISABILITY : rdis\_alt
\item  AGE : tage
\item  GENDER : esex
\item  RACE : trace
\item  EDUCATION : eeduc
\item  MARITAL\_STATUS : ems
\item  CITIZENSHIP\_STATUS : ecitizen
\item  FAMILY\_SIZE\_AVG : rfpersons
\item  HOUSEHOLD\_INC : thtotinc
\item  RECEIVED\_WORK\_COMP : ewc\_any
\item  TANF\_ASSISTANCE : etanf
\item  UNEMPLOYMENT\_COMP : eucany
\item  SEVERANCE\_PAY\_PENSION : elmpnow
\item  FOSTER\_CHILD\_CARE\_AMT : tfccamt
\item  CHILD\_SUPPORT\_AMT : tcsamt
\item  ALIMONY\_AMT : taliamt
\item  INCOME\_FROM\_ASSISTANCE : tptrninc, tpscininc, tpothinc
\item  INCOME : tpprpinc, tptotinc
\item  SAVINGS\_INV\_AMOUNT : tirakeoval, tthr401val
\item  UNEMPLOYMENT\_COMP\_AMOUNT : tuc1amt, tuc2amt, tuc3amt
\item  VA\_BENEFITS\_AMOUNT : tva1amt, tva2amt, tva3amt, tva4amt, tva5amt
\item  RETIREMENT\_INCOME\_AMOUNT : tret1amt, tret2amt, tret3amt, tret4amt, tret5amt, tret6amt, tret7amt, tret8amt
\item  SURVIVOR\_INCOME\_AMOUNT : tsur1amt, tsur2amt, tsur3amt, tsur4amt, tsur5amt, tsur6amt, tsur7amt, tsur8amt, tsur11amt, tsur13amt
\item  DISABILITY\_BENEFITS\_AMOUNT : tdis1amt, tdis2amt, tdis3amt, tdis4amt, tdis5amt, tdis6amt, tdis7amt, tdis10amt
\item  FOOD\_ASSISTANCE : efood\_type1, efood\_type2, efood\_type3, efood\_oth
\item  TRANSPORTATION\_ASSISTANCE : etrans\_type1, etrans\_type2, etrans\_type3, etrans\_type4, etrans\_oth
\item  SOCIAL\_SEC\_BENEFITS : esssany, esscany
\end{itemize}
These variables represent features derived from columns in the original data source via our own data cleaning and processing script. In particular, we discount columns that have more than 10\% missing values.

\paragraph{Demographic features.}
The full list of six demographic features we use is:
\begin{itemize}
\item AGE
\item GENDER
\item RACE
\item EDUCATION
\item MARITAL\_STATUS
\item CITIZENSHIP\_STATUS
\end{itemize}

\paragraph{Target variable.}
The target variable is constructed based on the feature thcyincpov in Wave~2, which reflects the household income-to-poverty ratio in the 2019 calendar year, excluding Type~2 individuals. Type~2 individuals are individuals that lives in the household for some month but no longer reside there.

We threshold thcyincpov at 3 so that all instances with thcyincpov strictly greater than 3 are labeled positive ($1$) and all others are labeled negative ($0$). This leads to 51.12\% positive instances. Hence an all ones predictor has accuracy 48.88\%.

\paragraph{Full set of figures.} Figure~\ref{fig:sipp-zeroone-full} shows all results for the zero-one loss, Figure~\ref{fig:sipp-squares-full} for the squared loss, and Figure~\ref{fig:sipp-roc-full} for ROC curves.

\begin{figure}
\includegraphics[width=0.98\linewidth]{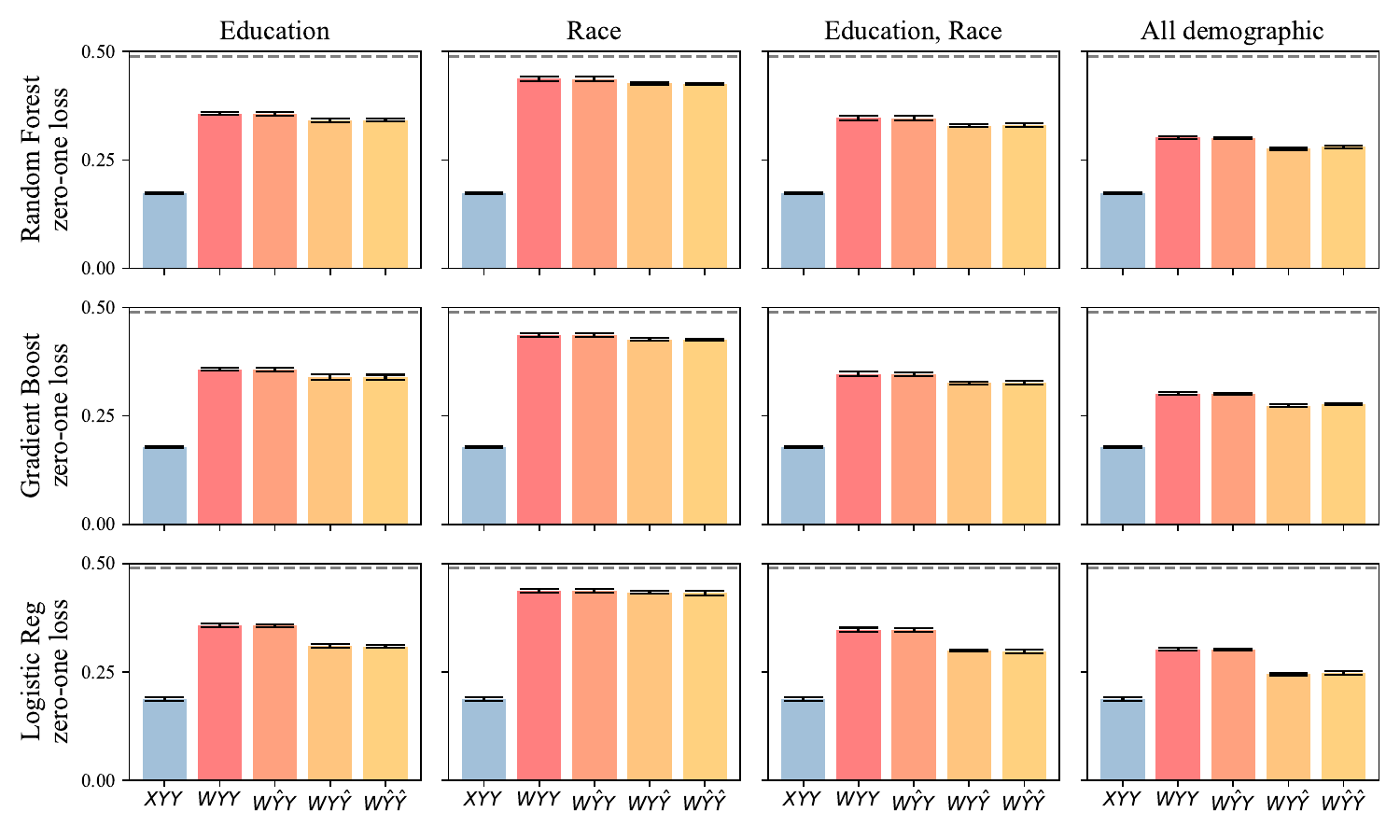}
\caption{Baselines on SIPP for varying features and classifiers (zero-one loss)}
\label{fig:sipp-zeroone-full}
\end{figure}

\begin{figure}
\includegraphics[width=0.98\linewidth]{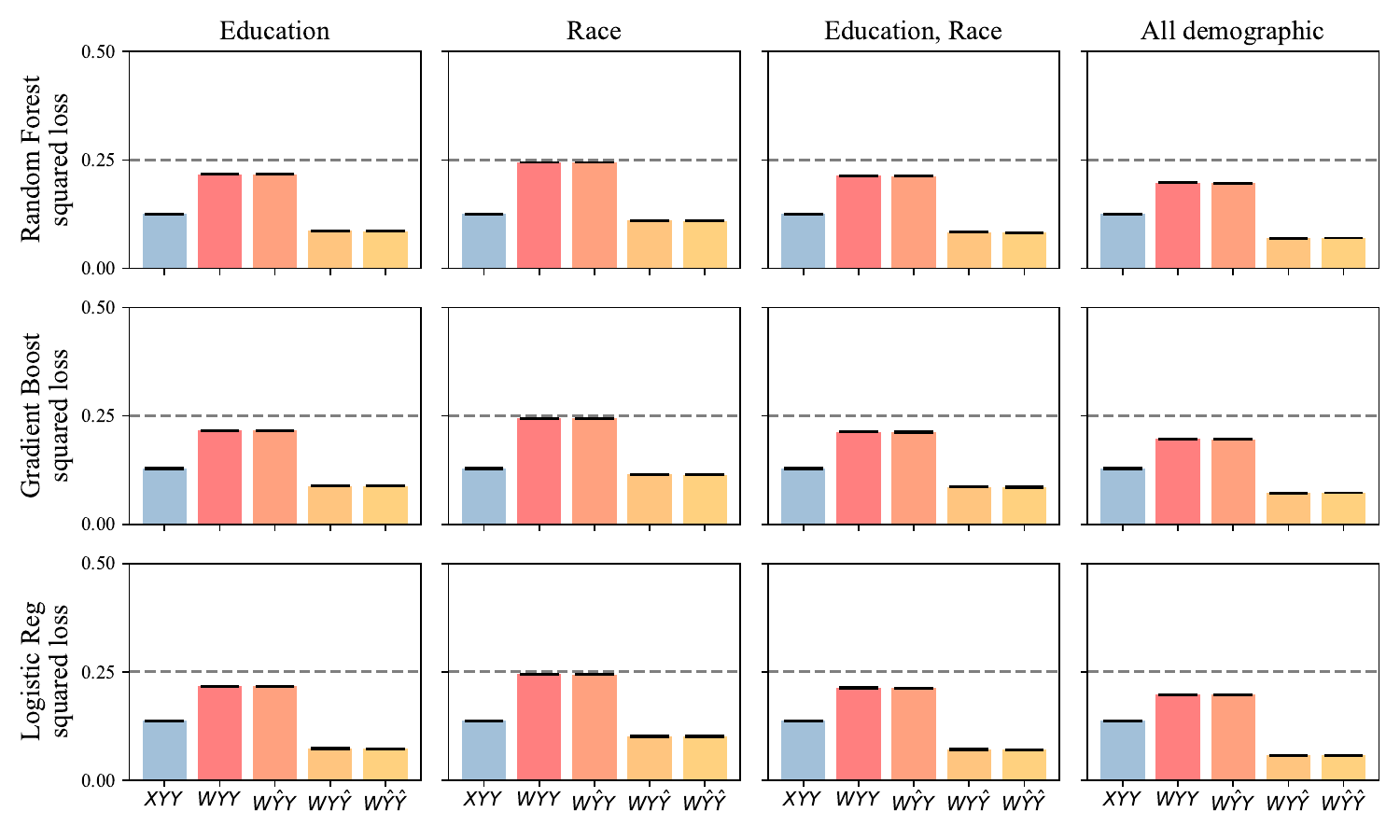}
\caption{Baselines on SIPP for varying features and classifiers (squared loss)}
\label{fig:sipp-squares-full}
\end{figure}

\begin{figure}
\includegraphics[width=0.98\linewidth]{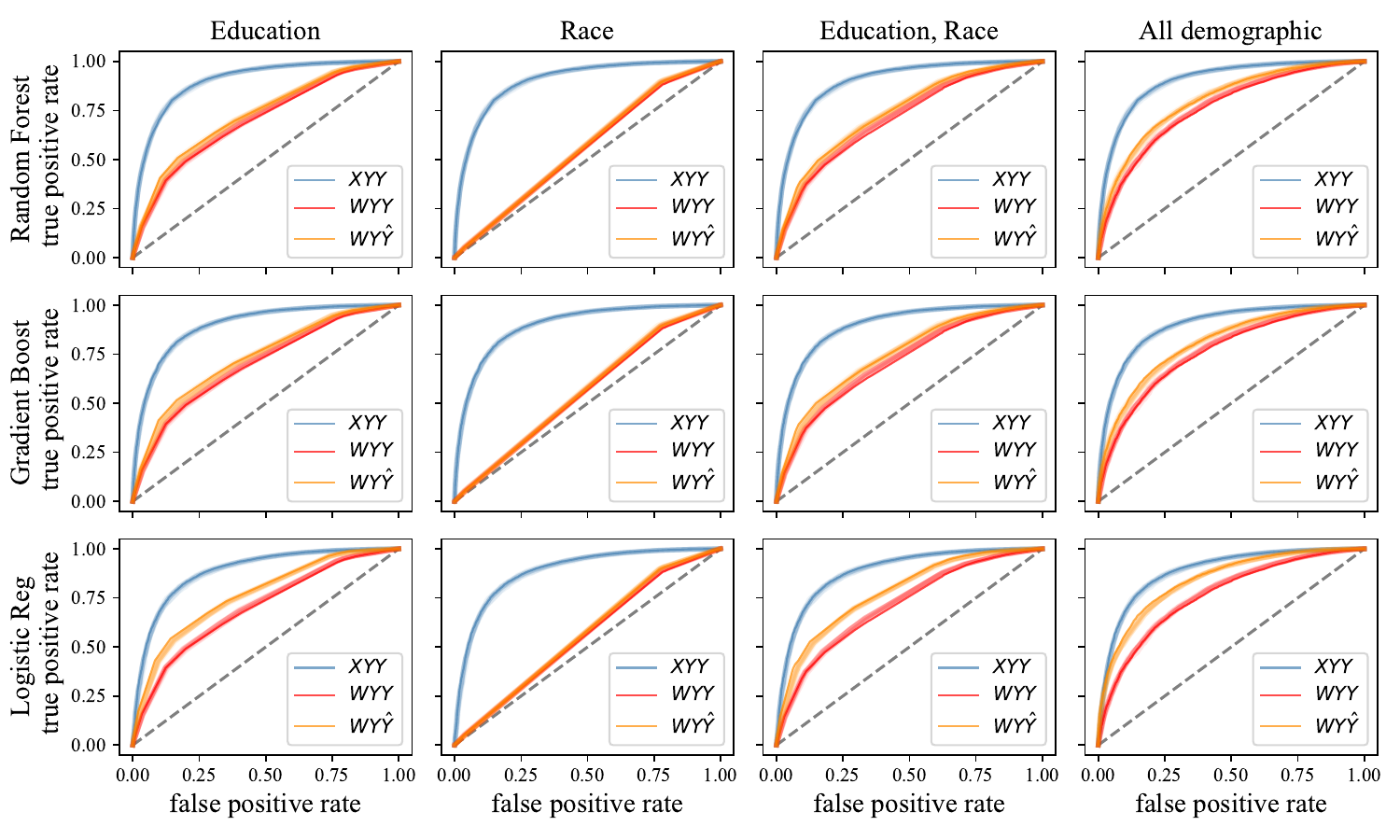}
\caption{Baselines on SIPP for varying features and classifiers (ROC curves)}
\label{fig:sipp-roc-full}
\end{figure}

\subsection{ProPublica COMPAS Recidivism Scores}

\paragraph{Data sources and use conditions.}
We use the COMPAS score dataset collected and made available by Problica~\cite{angwin2016machine}, which is widely used througout the algorithmic fairness literature. The Propublica COMPAS score dataset is available online: \url{https://github.com/propublica/compas-analysis}
The data repository does not specify a license or data use agreement.

\paragraph{Demographic features.}
We use the following demographic features avilable in the dataset:

\begin{itemize}
\item `race',
\item `age',
\item `juv\_fel\_count', `juv\_misd\_count', `juv\_other\_count' : juvenile priors
\item `prior\_count'
\end{itemize}

\paragraph{Target variable.} 
We use \emph{two-year recidivism} ('two\_year\_recid') as the target variable.

\paragraph{Predictor.}
Since we lack training data, we instead audit COMPAS scores as a black-box. 
The column in the data corresponding to COMPAS scores is called `decile\_score' and provides score deciles. To obtain a predictor we fit a single-variable model to predict the target variable from the score deciles. This amounts to a recalibration of the score values to the target variable, ensuring that we obtain the best possible predictor we can from the score deciles.

\paragraph{Full set of figures.} 
Figure~\ref{fig:COMPAS-full} shows all results we report on the COMPAS dataset.

\begin{figure}[ht]
\includegraphics[width=0.98\linewidth]{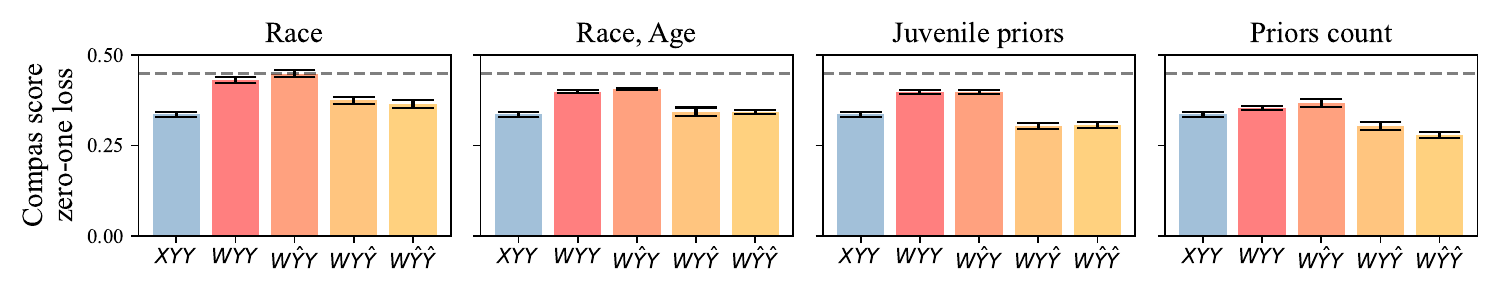}
\includegraphics[width=0.98\linewidth]{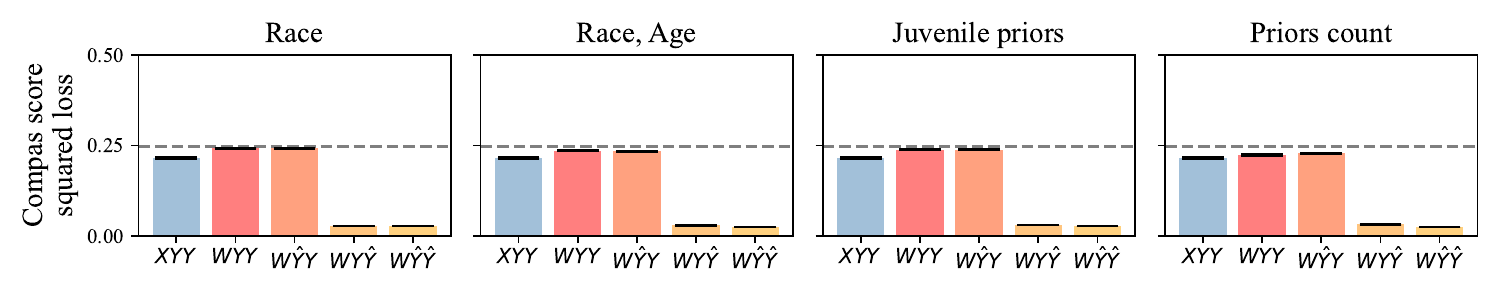}
\includegraphics[width=0.98\linewidth]{results/COMPAS_rocplot.pdf}
\caption{Baselines on COMPAS for varying features and metrics}
\label{fig:COMPAS-full}
\end{figure}

\clearpage
\section{Reference implementation of backward baselines}
\label{sec:code}

\lstset{basicstyle=\footnotesize\ttfamily,breaklines=true}
\begin{lstlisting}[language=Python]
from sklearn.ensemble import GradientBoostingClassifier
from sklearn.metrics import accuracy_score
from sklearn.model_selection import train_test_split


def backward_baselines(X, y, features, model): 
    """Compute backward baselines.

    Parameters
    ----------
    X : numpy.ndarray
        data matrix (n, d)
    y : numpy.ndarray
        target variable (n,)
    features : list
        list of column names
    model : object
        model supporting fit and predict
    
    Returns
    -------
    dict
        Scores of all backward baselines.
    """

    X_train, X_test, y_train, y_test = train_test_split(X, y, test_size=0.33)

    scores = {}
    # XYY
    model.fit(X_train, y_train)
    scores['XYY'] = accuracy_score(model.predict(X_test), y_test)

    # WYY
    baseline = GradientBoostingClassifier()
    baseline.fit(X_train[features], y_train)
    scores['WYY'] = accuracy_score(baseline.predict(X_test[features]), y_test)

    # WY^Y
    baseline.fit(X_test[features], model.predict(X_test))
    scores['WY^Y'] = accuracy_score(baseline.predict(X_test[features]), y_test)

    # WYY^
    baseline.fit(X_test[features], y_test)
    scores['WYY^'] = accuracy_score(baseline.predict(X_test[features]),
                                    model.predict(X_test))

    # WY^Y^ requires new train/test split
    X_testA, X_testB, y_testA, y_testB = train_test_split(X_test[features],
                                              model.predict(X_test), test_size=0.5)
    baseline.fit(X_testA, y_testA)
    scores['WY^Y^'] = accuracy_score(baseline.predict(X_testB), y_testB)

    return scores
\end{lstlisting}

\end{document}